\newtheorem{theorem}{Theorem}
\newtheorem{corollary}{Corollary}
\newtheorem{definition}{Definition}
\newtheorem{proposition}{Proposition}
\newtheorem{example}{Example}
\newdefinition{remark}{Remark}
\newproof{proof}{Proof}
\begin{document}
\begin{frontmatter}

\title{Characteristic matrix of covering and its application to boolean matrix decomposition}
\author[addr1]{Shiping Wang}
\author[addr2]{William Zhu\corref{cor1}}\ead{williamfengzhu@gmail.com}
\author[addr1]{Qingxin Zhu}
\author[addr2]{Fan Min}

\cortext[cor1]{Corresponding author.}
\address[addr1]{School of Computer Science and Engineering,\\University of Electronic Science and Technology of China, Chengdu 611731, China}
\address[addr2]{Lab of Granular Computing,\\Zhangzhou Normal University, Zhangzhou 363000, China}


\date{\today}
\begin{abstract}
Covering-based rough sets provide an efficient theory to deal with covering data which widely exist in practical applications.
Boolean matrix decomposition has been widely applied to data mining and machine learning.
In this paper, three types of existing covering approximation operators are represented by boolean matrices, and then they are used to decompose into boolean matrices.
First, we define two types of characteristic matrices of a covering.
Through these boolean characteristic matrices, three types of existing covering approximation operators are concisely equivalently represented.
Second, these operators are applied to boolean matrix decomposition.
We provide a sufficient and necessary condition for a square boolean matrix to decompose into the boolean product of another one and its transpose.
Then we develop an algorithm for this boolean matrix decomposition.
Finally, these three types of covering approximation operators are axiomatized using boolean matrices.
In a word, this work presents an interesting view to investigate covering-based rough set theory and its application.
\end{abstract}
\begin{keyword}
Covering, Rough set, Boolean matrix decomposition, Characteristic matrix, Approximation operator.
\end{keyword}
\end{frontmatter}

\section{Introduction}

As a widely used form of data representation, coverings commonly appear in incomplete information/decision systems based on symbol data~\cite{BianucciCattaneoCiucci07Entropies,CousoDubois01Rough,QianLiangDang10Incomplete}, numeric and fuzzy data~\cite{HuYuXie08Numerical,HuZhangPanAn11Measuring,FanZhu12Attribute}.
Covering-based rough set theory~\cite{Pomykala87Approximation,Zakowski83Approximations} is an efficient tool to deal with covering data.
In recent years, it has attracted much research interest with a series of significant problems proposed.
For example, different approximation models have been constructed~\cite{DaiXu12Approximation,WangZhu12Quantitative,YaoYao12Covering,ZhuWang07OnThree}, covering reduction problems have been defined~\cite{DuHuZhuMa11Rule,MinHeQianZhu11Test,WangZhuZhuMin12Covering,ZhuWang03Reduction}, generalization works have been conducted~\cite{DengChenXuDai07ANovel,FengZhangMi12TheRelation,LiLeungZhang08Generalized,YamakKazancDavvaz10Generalized,Zhang12Generalized} and combinations with other theories have been made~\cite{GeBaiYun12Topological,GhanimTantawySelim97OnLower,LiuZhu08TheAlgebraic,WangZhuZhuMin12Matroidal,YamakKazanciDavvaz08Applications}.

Specifically, axiomatization of covering approximation operators has become a hot issue.
For example, Zhu and Wang~\cite{ZhuWang03Reduction,ZhuWang07OnThree} proposed the reducible element to axiomatize the covering lower approximation operator.
Following by Zhu and Wang's work, Zhang et al.~\cite{ZhangLiWu10OnAxiomatic} axiomatized three pairs of covering approximation operators.
Liu and Sai~\cite{LiuSai09AComparison} constructed an axiom of a pair of covering approximation operators from the viewpoint of operator theory.
Unfortunately, as an efficient tool for computational models, matrices are seldom used in covering-based rough sets.
However, more and more achievements have been made in representing and axiomatizing classical and fuzzy rough sets using matrices~\cite{Liu08Generalized,Liu10Rough,LiuSai10Invertible}.
Naturally, this motivates us to represent and axiomatize covering-based rough sets using boolean matrices.

Boolean matrix decomposition has not only important practical meaning, but also profound theoretical significance.
In application, it has been widely used in data mining~\cite{BelohlavekVychodil10Discovery}, role engineering~\cite{LuVaidyaAtluri08Optimal} and machine learning~\cite{FrankAndreas12Multi-assignment}, and so on.
In theory, boolean matrix decomposition problems, such as deterministic column-based matrix decomposition and boolean matrix factorization, have attracted much research interest~\cite{AkelbekFitalShen09ABound,LiPang10Deterministic,Lingas11AFast}.
The boolean matrix representations of covering approximation operators also motivate us to decompose into boolean matrices from the viewpoint of rough set.

In this paper, we represent three pairs of covering approximation operators using boolean matrices and the representations in turn are used to decompose into boolean matrix.
First, we define two types of characteristic matrices of a covering and use them to concisely represent three pairs of covering approximation operators.
Second, through the matrix representation of covering upper approximation operator, we present a sufficient and necessary condition for a square boolean matrix to decompose into the boolean product of another boolean matrix and its transpose.
And an algorithm to complement this decomposition is designed.
Third, using the sufficient and necessary condition of boolean matrix decomposition, we axiomatize these three types of covering approximation operators.

The rest of this paper is arranged as follows.
Section~\ref{S:BasicDefinitions} reviews some fundamental concepts related to covering-based rough sets.
In Section~\ref{S:Matrixrepresentationofcoveringapproximationoperators}, we present two types of characteristic matrices of a covering and use them to represent three types of covering approximation operators.
Section~\ref{S:booleanmatrixdecompositionusingcoveringbasedroughsets} exhibits a sufficient and necessary condition, and design an algorithm for boolean matrix decomposition.
In Section~\ref{S:Axiomatizationofcovering-basedroughsetsusingmatrices}, we axiomatize these three types of covering approximation operators through boolean matrix decomposition.
Section~\ref{S:Conclusions} concludes this paper and points out further work.

\section{Basic definitions}
\label{S:BasicDefinitions}

This section recalls some fundamental definitions and existing results concerning covering-based rough sets.

\begin{definition}(Covering~\cite{ZhuWang03Reduction})
Let $U$ be a finite universe of discourse and $\mathbf{C}$ a family of subsets of $U$.
If none of subsets in $\mathbf{C}$ is empty and $\bigcup \mathbf{C}=U$, then $\mathbf{C}$ is called a covering of $U$.
\end{definition}

Neighborhoods are important concepts in rough sets, and they characterize the maximal and minimal dependence between an object and others.

\begin{definition}(Indiscernible neighborhood and neighborhood~\cite{Zhu09RelationshipBetween})
Let $\mathbf{C}$ be a covering of $U$ and $x\in U$.
$I_{\mathbf{C}}(x)=\bigcup\{K\in\mathbf{C}|x\in K\}$ and $N_{\mathbf{C}}(x)=\bigcap\{K\in\mathbf{C}|x\in K\}$ are called the indiscernible neighborhood and neighborhood of $x$ with respect to $\mathbf{C}$, respectively.
When there is no confusion, we omit the subscript $\mathbf{C}$.
\end{definition}

Neighborhood granule derived from coverings is a basic unit to characterize data, and leads to neighborhood-based decision systems, where neighborhood-based approximation operators have been used extensively in symbolic or/and numerical attribute reduction~\cite{HuYuXie08Numerical}.
In this paper, we study the following three types of lower and upper approximation operators.

\begin{definition}(Approximation operators~\cite{Pomykala87Approximation,Zhu09RelationshipBetween})
Let $\mathbf{C}$ be a covering of $U$. For all $X\subseteq U$, \\
$SH_{\mathbf{C}}(X)=\bigcup\{K\in\mathbf{C}|K\bigcap X\neq\emptyset\}$, $SL_{\mathbf{C}}(X)= [SH_{\mathbf{C}}(X^{c})]^{c}$,\\
$IH_{\mathbf{C}}(X)=\{x\in U|N(x)\bigcap X\neq \emptyset\}$, $IL_{\mathbf{C}}(X)=\{x\in U|N(x)\subseteq X\}$, \\
$XH_{\mathbf{C}}(X)=\bigcup\{N(x)|N(x)\bigcap X\neq \emptyset\}$, $XL_{\mathbf{C}}(X)=\bigcup\{N(x)|N(x)\subseteq X\}$, \\
are called the second, fifth, and sixth upper and lower approximations of $X$ with respect to $\mathbf{C}$, respectively.
When there is no confusion, we omit the subscript $\mathbf{C}$.
\end{definition}

In practical applications, much knowledge is redundant, therefore it is necessary to remove the redundancy and keep the essence.
For example, the reducible element has been applied to knowledge redundancy in rule learning~\cite{DuHuZhuMa11Rule}.

\begin{definition}(Reducible element~\cite{ZhuWang03Reduction})
Let $\mathbf{C}$ be a covering of $U$ and $K\in\mathbf{C}$.
If $K$ is a union of some sets in $\mathbf{C}-\{K\}$, then $K$ is called reducible; otherwise $K$ is called irreducible.
The family of all irreducible elements of $\mathbf{C}$ is called the reduct of $\mathbf{C}$, denoted as $Reduct(\mathbf{C})$.
\end{definition}

\section{Matrix representation of covering approximation operators}
\label{S:Matrixrepresentationofcoveringapproximationoperators}

In this section, we define the matrix representation of a family of subsets of a set, and then propose two types of characteristic matrices of a covering~\cite{WangZhuZhuMin12Characteristic}.
Through these two characteristic matrices of a covering, we represent three types of covering approximation operators.

\subsection{Matrix representation of a family of subsets of a set}
\label{S:matrixexpressionsoffamilyofsubsets}

This subsection represents a family of subsets of a set using a zero-one matrix, called boolean matrix.
Using this matrix, families of subsets are connected with binary relations and their further properties are found.
For any $n\times m$ matrix $M$, we denote $M = (M_{ij})_{n\times m}$ unless otherwise stated.

\begin{definition}
Let $\mathbf{F}=\{F_{1}, \cdots, F_{m}\}$ be a family of subsets of a finite set $U=\{x_{1}, \cdots, x_{n}\}$.
We define $M_{\mathbf{F}}=((M_{\mathbf{F}})_{ij})_{n\times m}$ as follows:
\begin{center}
$(M_{\mathbf{F}})_{ij}=\left\{
\begin{matrix}
1, &x_{i}\in F_{j},\\
0, &x_{i}\notin F_{j}.
\end{matrix}\right.$
\end{center}
$M_{\mathbf{F}}$ is called a matrix representation of $\mathbf{F}$, or called a matrix representing $\mathbf{F}$.
\end{definition}

The following example shows that different matrices can be used to represent the same family of subsets of a set.

\begin{example}
\label{E:matricesrepresenting}
Let $U=\{a, b, c, d, e\}$ and $\mathbf{F}=\{\{a, b, c\}, \{b, d\}, \{c, d\}\}$.
Then $M_{1}$ and $M_{2}$ are matrices representing $\mathbf{F}$.
\begin{center}
$M_{1}=\left(\begin{array}{lcr}
  1 & 0 & 0\\
  1 & 1 & 0\\
  1 & 0 & 1\\
  0 & 1 & 1\\
  0 & 0 & 0
\end{array}\right)$,
$M_{2}=\left(\begin{array}{lcr}
  0 & 1 & 0\\
  1 & 1 & 0\\
  0 & 1 & 1\\
  1 & 0 & 1\\
  0 & 0 & 0
\end{array}\right)$.
\end{center}
\end{example}

There are matrices representing the same family of subsets of a set, however it is interesting that the boolean product of one matrix and its transpose is unique when an order of elements of the universe is given.

\begin{proposition}
Let $\mathbf{F}=\{F_{1}, \cdots, F_{m}\}$ be a family of subsets of $U=\{x_{1}, \cdots, x_{n}\}$ and $M_{1}$, $M_{2}$ matrices representing $\mathbf{F}$.
Then $M_{1}\cdot M_{1}^{T}=M_{2}\cdot M_{2}^{T}$, where $M\cdot M^{T}$ is the boolean product of $M$ and its transpose $M^{T}$.
\end{proposition}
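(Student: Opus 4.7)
The plan is to unfold the definition of the boolean product entrywise and observe that each entry of $M\cdot M^{T}$ encodes a condition that depends only on $\mathbf{F}$ as a family of subsets, not on the particular ordering of its members used to build the matrix.

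First I would fix the universe ordering $U=\{x_{1},\dots,x_{n}\}$ and let $M$ be any matrix representing $\mathbf{F}=\{F_{1},\dots,F_{m}\}$. Writing out the boolean product, one gets
\begin{equation*}
(M\cdot M^{T})_{ij}=\bigvee_{k=1}^{m}\bigl(M_{ik}\wedge M_{jk}\bigr),
\end{equation*}
and by the defining property of $M_{\mathbf{F}}$, the term $M_{ik}\wedge M_{jk}$ equals $1$ precisely when $x_{i}\in F_{k}$ and $x_{j}\in F_{k}$. Hence $(M\cdot M^{T})_{ij}=1$ if and only if there exists some $F\in\mathbf{F}$ with $\{x_{i},x_{j}\}\subseteq F$.

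Next I would note that this last condition, \emph{``there exists $F\in\mathbf{F}$ such that $\{x_{i},x_{j}\}\subseteq F$''}, refers only to $\mathbf{F}$ as a collection of subsets of $U$ and makes no reference to the indexing $F_{1},\dots,F_{m}$. Therefore, applying the same derivation to $M_{1}$ and $M_{2}$, the entries $(M_{1}\cdot M_{1}^{T})_{ij}$ and $(M_{2}\cdot M_{2}^{T})_{ij}$ are characterized by the identical membership condition, and so they coincide for every $i,j$. This yields $M_{1}\cdot M_{1}^{T}=M_{2}\cdot M_{2}^{T}$.

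I do not expect a real obstacle; the only mild subtlety is making sure the argument is valid when $\mathbf{F}$ is regarded as a multiset (allowing repeated subsets), since different matrix representations may correspond not merely to a permutation of columns but possibly to duplicating the same column pattern. In that case one still wins, because in a boolean disjunction repeated identical disjuncts do not change the value, so the existential characterization of $(M\cdot M^{T})_{ij}$ is unaffected by how many copies of a given column appear. A brief remark to this effect is all that is needed to close the proof cleanly.
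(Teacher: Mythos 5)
Your proof is correct and rests on the same core fact as the paper's: expanding the boolean product gives $(M\cdot M^{T})_{ij}=\bigvee_{k}(M_{ik}\wedge M_{jk})$, a symmetric function of the columns, so the result is independent of how $\mathbf{F}$ is enumerated (the paper phrases this as invariance under column exchanges, you phrase it as an enumeration-free existential characterization of each entry). Your closing remark about repeated columns is a harmless extra; since the paper treats $\mathbf{F}$ as a set, representing matrices differ only by a column permutation.
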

\begin{proof}
Since $M_{1}$ and $M_{2}$ are matrices representing $\mathbf{F}$, $M_{1}$ can be transformed into $M_{2}$ through list exchanges.
Hence we only need to prove $M_{1}\cdot M_{1}^{T}=M_{2}\cdot M_{2}^{T}$ when $M_{1}=(a_{1}, \cdots, a_{i}, \cdots, a_{j}, \cdots, a_{m})$ and $M_{2}=(a_{1}, \cdots, a_{j}, \cdots, a_{i}, \cdots, a_{m})$ for $1\leq i < j\leq m$, where $a_{k}$ ia a $n$-dimensional column vector.
Thus $M_{1}\cdot M_{1}^{T}=(a_{1}, \cdots, a_{m})\cdot$
$\left(\begin{array}{c}
a_{1}^{T}\\
\vdots\\
a_{m}^{T}
\end{array}\right)$=$\vee_{k=1}^{m}(a_{k}\cdot a_{k}^{T})=M_{2}\cdot M_{2}^{T}$.
\end{proof}

The following example is provided to illustrate the uniqueness of the boolean product of any matrix representing a covering and its transpose.

\begin{example}
As shown in Example~\ref{E:matricesrepresenting}, $M_{1}\cdot M_{1}^{T}=M_{2}\cdot M_{2}^{T}=\left(
\begin{matrix}
1 & 1 & 1 & 0 & 0\\
1 & 1 & 1 & 1 & 0\\
1 & 1 & 1 & 1 & 0\\
0 & 1 & 1 & 1 & 0\\
0 & 0 & 0 & 0 & 0
\end{matrix}\right).$
\end{example}

Suppose $R$ is a relation on $U=\{x_{1}, \cdots, x_{n}\}$, then its relational matrix  $M_{R}=((M_{R})_{ij})_{n\times n}$ is defined as follows: \\
\begin{center}
$(M_{R})_{ij}=\left\{
\begin{matrix}
1, ~~(x_{i}, x_{j})\in R, \\
0, ~~(x_{i}, x_{j})\notin R.
\end{matrix}\right.$
\end{center}
Conversely, for any $n$-by-$n$ boolean matrix $M$, there exists a relation $R$ such that $M=M_{R}$; we say $R$ is induced by $M$.

According to the above notation, there is a one-to-one correspondence between binary relations on $U$ and $|U|\times |U|$ boolean matrices, which builds the connection between matrices representing families of subsets and binary relations.

\begin{proposition}
Let $\mathbf{F}$ be a family of subsets of $U$.
There exists a symmetric relation $R$ such that $M_{\mathbf{F}}\cdot M_{\mathbf{F}}^{T}$ is the relational matrix of $R$.
\end{proposition}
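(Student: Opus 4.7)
The plan is to exhibit the symmetric relation directly by reading it off from the boolean product matrix, and then verify symmetry by a one-line transpose computation.

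First I would note that $M_{\mathbf{F}}$ is $n\times m$ where $n=|U|$, so $M_{\mathbf{F}}\cdot M_{\mathbf{F}}^{T}$ is an $n\times n$ boolean matrix. By the one-to-one correspondence between $n$-by-$n$ boolean matrices and binary relations on $U$ recalled just above the proposition, there is a unique relation $R$ on $U$ induced by this matrix, i.e.\ $M_{R}=M_{\mathbf{F}}\cdot M_{\mathbf{F}}^{T}$. Concretely, $(x_{i},x_{j})\in R$ iff $(M_{\mathbf{F}}\cdot M_{\mathbf{F}}^{T})_{ij}=1$, which by the definition of boolean product happens iff some $F_{k}\in\mathbf{F}$ contains both $x_{i}$ and $x_{j}$.

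The remaining task is to verify that $R$ is symmetric, and the natural way is to show that its relational matrix is symmetric. I would compute
\begin{center}
$(M_{\mathbf{F}}\cdot M_{\mathbf{F}}^{T})^{T}=(M_{\mathbf{F}}^{T})^{T}\cdot M_{\mathbf{F}}^{T}=M_{\mathbf{F}}\cdot M_{\mathbf{F}}^{T}$,
\end{center}
which uses only the standard identity $(AB)^{T}=B^{T}A^{T}$ for boolean matrices. From $M_{R}=M_{R}^{T}$ it follows that $(x_{i},x_{j})\in R\Leftrightarrow (x_{j},x_{i})\in R$, so $R$ is symmetric as required.

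There is essentially no hard step here: the only thing worth flagging is to make sure the transpose identity for the boolean product is legitimate, but it follows at once because each entry $(M_{\mathbf{F}}\cdot M_{\mathbf{F}}^{T})_{ij}=\bigvee_{k}(M_{\mathbf{F}})_{ik}\wedge (M_{\mathbf{F}})_{jk}$ is manifestly symmetric in $i$ and $j$. So the proof reduces to (i) invoking the matrix-to-relation correspondence to obtain $R$, and (ii) a one-line check of symmetry of $M_{\mathbf{F}}\cdot M_{\mathbf{F}}^{T}$.
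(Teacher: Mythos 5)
Your argument is correct and matches the paper's own proof essentially verbatim: both obtain $R$ from the matrix-to-relation correspondence and verify symmetry via $(M_{\mathbf{F}}\cdot M_{\mathbf{F}}^{T})^{T}=(M_{\mathbf{F}}^{T})^{T}\cdot M_{\mathbf{F}}^{T}=M_{\mathbf{F}}\cdot M_{\mathbf{F}}^{T}$. Your extra remark justifying the transpose identity for the boolean product entry-wise is a welcome bit of care the paper omits, but it does not change the approach.
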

\begin{proof}
We need to prove only that $M_{\mathbf{F}}\cdot M_{\mathbf{F}}^{T}$ is a symmetric matrix.
It is straightforward since $(M_{\mathbf{F}}\cdot M_{\mathbf{F}}^{T})^{T}=(M_{\mathbf{F}}^{T})^{T}\cdot M_{\mathbf{F}}^{T}=M_{\mathbf{F}}\cdot M_{\mathbf{F}}^{T}$.
\end{proof}

The square matrix, the boolean product of a matrix representing a family of subsets of a universe and its transpose, is regarded as a whole and satisfies idempotence.

\begin{proposition}
Let $\mathbf{F}=\{F_{1}, \cdots, F_{m}\}$ be a family of subsets of $U$.
If for all $1\leq i<j\leq m$, $F_{i}\bigcap F_{j}=\emptyset$, then $(M_{\mathbf{F}}\cdot M_{\mathbf{F}}^{T})^{2}=M_{\mathbf{F}}\cdot M_{\mathbf{F}}^{T}$.
\end{proposition}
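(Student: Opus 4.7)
The plan is to argue entrywise, translating the boolean matrix equation into the set-theoretic statement that two elements are related by $M_{\mathbf{F}}\cdot M_{\mathbf{F}}^{T}$ exactly when some $F_{j}$ contains both of them, and then showing that pairwise disjointness makes this relation transitive.

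First I would set $B=M_{\mathbf{F}}\cdot M_{\mathbf{F}}^{T}$ and unpack the boolean product: writing $A=M_{\mathbf{F}}$, we have $B_{ik}=\bigvee_{j=1}^{m}A_{ij}\,A_{kj}$, which equals $1$ iff there exists $j$ with $x_{i},x_{k}\in F_{j}$. Similarly $(B^{2})_{il}=\bigvee_{k=1}^{n}B_{ik}\,B_{kl}$, which equals $1$ iff there exist $k$ and indices $j_{1},j_{2}$ with $x_{i},x_{k}\in F_{j_{1}}$ and $x_{k},x_{l}\in F_{j_{2}}$.

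For the inclusion $B\le B^{2}$, I would suppose $B_{il}=1$ and pick $j$ with $x_{i},x_{l}\in F_{j}$. Since $x_{i}\in F_{j}$, the diagonal entry $B_{ii}=\bigvee_{j}A_{ij}^{2}=1$, hence $(B^{2})_{il}\ge B_{ii}\,B_{il}=1$ by taking $k=i$. For the reverse inclusion $B^{2}\le B$, I would suppose $(B^{2})_{il}=1$, pick $k,j_{1},j_{2}$ witnessing this, and invoke the pairwise-disjointness hypothesis: the point $x_{k}$ lies in both $F_{j_{1}}$ and $F_{j_{2}}$, which forces $j_{1}=j_{2}$. Hence $x_{i}$ and $x_{l}$ both lie in $F_{j_{1}}$, so $B_{il}=1$.

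There is no serious obstacle here; the only thing to watch is that all arithmetic is boolean, so the reverse inclusion is genuinely nontrivial and relies on the disjointness assumption, while the forward inclusion rests on the observation that the $(i,i)$ diagonal entry of $B$ is $1$ whenever $x_{i}$ belongs to some $F_{j}$ (which is automatic from $B_{il}=1$). Conceptually, the proof is saying that under disjointness $B$ is the relational matrix of a partial equivalence relation, which is automatically idempotent under boolean composition.
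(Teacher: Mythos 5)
Your proof is correct and follows essentially the same route as the paper's: both argue entrywise, establish $B\le B^{2}$ via the observation that the relevant diagonal entry is $1$, and establish $B^{2}\le B$ by using pairwise disjointness to force the two witnessing blocks $F_{j_{1}},F_{j_{2}}$ to coincide. The only cosmetic difference is that the paper organizes the second direction as a proof by contradiction ($F_{g}\neq F_{h}$ yet $x_{k_{0}}\in F_{g}\cap F_{h}$) and separately treats the diagonal case $x_{i}\notin\bigcup\mathbf{F}$, whereas your direct argument handles all cases uniformly.
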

\begin{proof}
Denote $M_{\mathbf{F}}\cdot M_{\mathbf{F}}^{T}=\left(
\begin{matrix}
a_{1}\\
\vdots\\
a_{n}
\end{matrix}\right)\cdot(a_{1}^{T}, \cdots, a_{n}^{T})=(t_{ij})_{n\times n}$ and $(M_{\mathbf{F}}\cdot M_{\mathbf{F}}^{T})^{2}=(s_{ij})_{n\times n}$.
For all $1\leq i< j\leq m$, $F_{i}\bigcap F_{j}=\emptyset$, then \\
$t_{ij}=a_{i}\cdot a_{j}^{T}=\left\{
\begin{array}{lcr}
1, ~\exists F\in\mathbf{F}$, s.t. $x_{i}, x_{j}\in F,\\
0$,~otherwise.
$\end{array}\right.$.
If $t_{ij}=1$, then $s_{ij}=\vee_{k=1}^{n}(t_{ik}\wedge t_{kj})\geq t_{ij}\wedge t_{jj}=1$, which implies $s_{ij}=1$.
If $t_{ij}=0$ and $i=j$, then $x_{i}\notin\bigcup\mathbf F$ and $a_{i}=[0, \cdots, 0]$, which implies $s_{ij}=0$.
If $t_{ij}=0$ and $i\neq j$, then we need to prove $s_{ij}=0$.
In fact, if $s_{ij}=\vee_{k=1}^{n}(t_{ik}\wedge t_{kj})=1$, then there exists $k_{0}\in\{1, \cdots, n\}$ such that $t_{ik_{0}}=t_{jk_{0}}=1$.
Thus there exist $F_{g}, F_{h}\in\mathbf{F}$ such that $x_{i}, x_{k_{0}}\in F_{g}$ and $x_{j}, x_{k_{0}}\in F_{h}$.
Since $t_{ij}=0$, $F_{g}\neq F_{h}$.
Therefore, $x_{k_{0}}\in F_{g}\bigcap F_{h}$, which is contradictory with $F_{i}\bigcap F_{j}=\emptyset$ for all $i, j\in\{1, \cdots, n\}$ and $i\neq j$.
\end{proof}

\begin{corollary}
Let $\mathbf{F}=\{F_{1}, \cdots, F_{m}\}$ be a family of subsets of $U$.
If for all $1\leq i<j\leq m$, $F_{i}\bigcap F_{j}=\emptyset$, then there exists a transitive relation $R$ such that $M_{\mathbf{F}}\cdot M_{\mathbf{F}}^{T}$ is the relational matrix of $R$.
\end{corollary}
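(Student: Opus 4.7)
The plan is to combine the one-to-one correspondence between $|U|\times|U|$ boolean matrices and binary relations on $U$ (stated just before the previous proposition) with the idempotence result already established under the disjointness hypothesis. Concretely, since $M_{\mathbf{F}}\cdot M_{\mathbf{F}}^{T}$ is an $n\times n$ boolean matrix, let $R$ be the unique binary relation on $U$ it induces, so that $M_{R}=M_{\mathbf{F}}\cdot M_{\mathbf{F}}^{T}$. The only thing left is to check that this $R$ is transitive.

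The main observation is the standard matrix criterion: a relation $R$ on a finite set is transitive if and only if $M_{R}^{2}\leq M_{R}$ entrywise in the boolean sense, because the $(i,k)$-entry of $M_{R}^{2}$ equals $\bigvee_{j}\bigl((M_R)_{ij}\wedge (M_R)_{jk}\bigr)$, which is $1$ exactly when there is some intermediate $j$ with $(x_i,x_j),(x_j,x_k)\in R$. So I would first state this characterization, then invoke the preceding proposition, which under the disjointness assumption on $\mathbf{F}$ gives $(M_{\mathbf{F}}\cdot M_{\mathbf{F}}^{T})^{2}=M_{\mathbf{F}}\cdot M_{\mathbf{F}}^{T}$, i.e.\ $M_{R}^{2}=M_{R}$. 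In particular $M_{R}^{2}\leq M_{R}$, so $R$ is transitive.

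There is essentially no obstacle here: the work has been done in the previous proposition, and the corollary is merely translating idempotence of $M_{\mathbf{F}}\cdot M_{\mathbf{F}}^{T}$ into the language of relations. The only point requiring a sentence of justification is the equivalence between transitivity of $R$ and the inequality $M_{R}^{2}\leq M_{R}$, which follows directly from the definition of boolean matrix multiplication. Thus the proof should be two or three lines: define $R$ via the induced-relation correspondence, apply the previous proposition to conclude $M_{R}^{2}=M_{R}$, and remark that this forces transitivity.
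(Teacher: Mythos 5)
Your proof is correct and follows exactly the route the paper intends: the corollary is stated without proof immediately after the idempotence proposition, precisely because it reduces to taking the induced relation $R$ and noting that $M_{R}^{2}=M_{R}$ implies $M_{R}^{2}\leq M_{R}$, which is the standard matrix criterion for transitivity. Nothing is missing.
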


\begin{corollary}
If $\mathbf{P}$ is a partition of $U$, then $(M_{\mathbf{P}}\cdot M_{\mathbf{P}}^{T})^{2}=M_{\mathbf{P}}\cdot M_{\mathbf{P}}^{T}$.
\end{corollary}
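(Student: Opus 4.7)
The plan is to derive the corollary as an immediate specialization of the preceding proposition, since the only hypothesis used there, namely pairwise disjointness of the family, is part of the very definition of a partition.

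First I would unpack the definition of partition: a partition $\mathbf{P}=\{P_{1},\ldots,P_{m}\}$ of $U$ consists of non-empty subsets that are pairwise disjoint (so $P_{i}\cap P_{j}=\emptyset$ whenever $1\leq i<j\leq m$) and whose union is $U$. The second and third conditions are not required for the conclusion; only the pairwise disjointness matters for idempotence.

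Next I would observe that the preceding proposition applies verbatim with $\mathbf{F}=\mathbf{P}$: its hypothesis, that $F_{i}\cap F_{j}=\emptyset$ for all $1\leq i<j\leq m$, holds because $\mathbf{P}$ is a partition. Therefore $(M_{\mathbf{P}}\cdot M_{\mathbf{P}}^{T})^{2}=M_{\mathbf{P}}\cdot M_{\mathbf{P}}^{T}$.

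There is essentially no obstacle: this is a one-line corollary whose only content is the observation that ``partition'' is a stronger notion than ``pairwise disjoint family''. The fact that a partition additionally covers $U$ (so that every $x_{i}\in\bigcup\mathbf{P}$, i.e.\ no row of $M_{\mathbf{P}}$ is zero) is irrelevant for idempotence but does strengthen the relational interpretation: combining with the previous results, the induced relation is in fact an equivalence relation, which one might remark in passing, though it is not needed for the stated conclusion.
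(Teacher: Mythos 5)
Your proposal is correct and matches the paper's intent exactly: the corollary is stated without proof precisely because it is the immediate specialization of the preceding proposition to the case $\mathbf{F}=\mathbf{P}$, where pairwise disjointness holds by the definition of a partition. Your added remark that the covering condition is unnecessary for idempotence (but yields the equivalence-relation interpretation) is accurate and consistent with the surrounding corollaries.
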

\subsection{Type-1 characteristic matrix of covering}
\label{S:matrixrepresentingofcovering}

In this subsection, type-1 characteristic matrix of a covering is defined, and then connections between coverings and binary relations are established.

\begin{definition}(Type-1 characteristic matrix of covering)
Let $\mathbf{C}$ be a covering of $U$.
Then $M_{\mathbf{C}}\cdot M_{\mathbf{C}}^{T}$ is called type-1 characteristic matrix of $\mathbf{C}$, denoted as $\Gamma(\mathbf{C})$.
\end{definition}

Properties of type-1 characteristic matrix of a covering are studied.
In fact, its elements on the main diagonal are equal to one.

\begin{proposition}
\label{P:maindiagonalequaltoone}
Let $\mathbf{F}$ be a family of subsets of $U$ and $\emptyset\notin\mathbf{F}$.
$\mathbf{F}$ is a covering iff all the elements on the main diagonal of $M_{\mathbf{F}}\cdot M_{\mathbf{F}}^{T}$ are one.
\end{proposition}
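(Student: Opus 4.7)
The plan is to reduce the statement to a direct computation of the diagonal entries of the boolean product $M_{\mathbf{F}}\cdot M_{\mathbf{F}}^{T}$, and then translate the condition back to the covering axiom $\bigcup \mathbf{F}=U$. The hypothesis $\emptyset \notin \mathbf{F}$ is assumed on both sides of the biconditional, so the only nontrivial piece is the equivalence between ``$\bigcup \mathbf{F}=U$'' and ``all diagonal entries of $\Gamma(\mathbf{F})$ equal $1$''.

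First, I would unfold the definition of the boolean product. For each $i\in\{1,\dots,n\}$, the $(i,i)$-entry of $M_{\mathbf{F}}\cdot M_{\mathbf{F}}^{T}$ is
\[
(M_{\mathbf{F}}\cdot M_{\mathbf{F}}^{T})_{ii}=\bigvee_{j=1}^{m}\bigl((M_{\mathbf{F}})_{ij}\wedge (M_{\mathbf{F}}^{T})_{ji}\bigr)=\bigvee_{j=1}^{m}(M_{\mathbf{F}})_{ij},
\]
because $(M_{\mathbf{F}}^{T})_{ji}=(M_{\mathbf{F}})_{ij}$. By the very definition of $M_{\mathbf{F}}$, the right-hand side equals $1$ iff there exists some $F_{j}\in\mathbf{F}$ with $x_{i}\in F_{j}$, i.e.\ iff $x_{i}\in\bigcup\mathbf{F}$.

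From here both directions follow immediately. For the forward direction, if $\mathbf{F}$ is a covering then $\bigcup\mathbf{F}=U$, so for every $i$ there is some $F_{j}$ containing $x_{i}$, forcing $(M_{\mathbf{F}}\cdot M_{\mathbf{F}}^{T})_{ii}=1$. For the converse, if every diagonal entry is $1$ then every $x_{i}\in U$ lies in some $F_{j}\in\mathbf{F}$, so $\bigcup\mathbf{F}=U$; combined with the standing assumption $\emptyset\notin\mathbf{F}$, this is the definition of a covering.

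There is really no obstacle here; the only thing to be careful about is writing the boolean product correctly (using $\vee$ and $\wedge$ rather than ordinary $+$ and $\cdot$) so that the reduction of the diagonal entry to $\bigvee_{j}(M_{\mathbf{F}})_{ij}$ is transparent, and explicitly citing $\emptyset\notin\mathbf{F}$ when invoking the definition of a covering in the backward direction.
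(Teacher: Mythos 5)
Your proposal is correct and follows essentially the same route as the paper: both reduce the diagonal entry $t_{ii}$ to $\bigvee_{j}(M_{\mathbf{F}})_{ij}$ (the paper phrases this as the $i$-th row being nonzero) and then identify that condition with $x_{i}\in\bigcup\mathbf{F}$. Your write-up is if anything slightly more explicit about invoking $\emptyset\notin\mathbf{F}$ in the converse, which the paper leaves implicit.
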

\begin{proof}
Denote $M_{\mathbf{F}}=\left(
\begin{matrix}
a_{1}\\
\vdots\\
a_{n}
\end{matrix}\right)$ where $a_{i}$ is a $m$-dimensional row vector and $M_{\mathbf{F}}\cdot M_{\mathbf{F}}^{T}=(t_{ij})_{n\times n}$.\\
($\Longrightarrow$): Since $\mathbf{F}$ is a covering of $U$, $a_{i}\neq [0, \cdots, 0]$ for all $i\in\{1, \cdots, m\}$.
Hence $t_{ii}=\vee_{k=1}^{m}(a_{ik}\wedge a_{ki})=\vee_{k=1}^{m}(a_{ik}\wedge a_{ik})=1$.\\
($\Longleftarrow$): If $\mathbf{F}$ is not a covering of $U$, then we suppose $x_{i}\in U-\bigcup\mathbf{F}$.
Thus $a_{i}=[0, \cdots, 0]$ implies $t_{ii}=0$, which is contradictory that all the elements on the main diagonal of $M_{\mathbf{F}}\cdot M_{\mathbf{F}}^{T}$ are one.
\end{proof}

According to the above property of type-1 characteristic matrix of a covering, the relationship between coverings and reflexive relations is established.

\begin{corollary}
Let $\mathbf{F}$ be a family of subsets of $U$ and $\emptyset\notin\mathbf{F}$.
$\mathbf{F}$ is a covering iff there exists a reflexive relation $R$ such that $M_{\mathbf{F}}\cdot M_{\mathbf{F}}^{T}$ is the relational matrix of $R$.
\end{corollary}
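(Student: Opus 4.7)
The plan is to obtain this corollary as a direct rewording of the preceding Proposition~\ref{P:maindiagonalequaltoone} in relational language, using the one-to-one correspondence between $|U| \times |U|$ boolean matrices and binary relations on $U$ stated earlier in this subsection.

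First, I would invoke that correspondence to fix notation: since $M_{\mathbf{F}} \cdot M_{\mathbf{F}}^{T}$ is an $n \times n$ boolean matrix (where $n = |U|$), there is a unique binary relation $R$ on $U$ such that $M_{\mathbf{F}} \cdot M_{\mathbf{F}}^{T}$ is the relational matrix of $R$. Thus the existence of such an $R$ in the statement is automatic, and the only content is whether this canonical $R$ is reflexive.

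Next, I would reduce reflexivity of $R$ to a diagonal condition on its relational matrix. By definition, $R$ is reflexive iff $(x_i, x_i) \in R$ for every $i \in \{1, \dots, n\}$, which by the definition of the relational matrix is equivalent to $(M_R)_{ii} = 1$ for all $i$; that is, all entries on the main diagonal of $M_{\mathbf{F}} \cdot M_{\mathbf{F}}^{T}$ equal one.

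Finally, I would chain this with Proposition~\ref{P:maindiagonalequaltoone}: $\mathbf{F}$ is a covering of $U$ iff all diagonal entries of $M_{\mathbf{F}} \cdot M_{\mathbf{F}}^{T}$ are one, iff $R$ is reflexive. Both directions of the corollary then follow immediately. There is no real obstacle here; the only minor point requiring care is to explicitly note that the canonically induced $R$ is the relation whose reflexivity we are testing, so that ``there exists a reflexive $R$'' and ``the canonical $R$ is reflexive'' say the same thing (uniqueness of $R$ given the matrix rules out any ambiguity).
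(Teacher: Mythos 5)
Your proof is correct and matches the paper's intended argument exactly: the corollary is stated as an immediate consequence of Proposition~\ref{P:maindiagonalequaltoone}, obtained precisely by translating ``all diagonal entries equal one'' into ``the induced relation is reflexive'' via the bijection between $|U|\times|U|$ boolean matrices and binary relations. Your explicit remark that the induced relation is unique, so existence reduces to checking the canonical one, is a sound and welcome clarification of a step the paper leaves implicit.
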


According to the definition of the relational matrix of a relation, the type-1 characteristic matrix of a covering is a square boolean matrix, which induces a unique binary relation.
The following proposition represents the relation through covering blocks.

\begin{proposition}
\label{Prelationandset}
Let $\mathbf{C}$ be a covering of $U$ and $R_{\mathbf{C}}$ the relation induced by $\Gamma(\mathbf{C})$.
For all $x_{i}, x_{j}\in U$, $(x_{i}, x_{j})\in R_{\mathbf{C}}$ iff there exists $C\in\mathbf{C}$ such that $x_{i}, x_{j}\in C$.
\end{proposition}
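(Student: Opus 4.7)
The plan is to unwind the definitions and read off the claim from a direct computation of a single matrix entry. By the definition immediately preceding the proposition, $(x_{i}, x_{j})\in R_{\mathbf{C}}$ exactly when the $(i,j)$-entry of $\Gamma(\mathbf{C})$ equals $1$; and by the definition of $\Gamma(\mathbf{C})$, this entry is nothing but $(M_{\mathbf{C}}\cdot M_{\mathbf{C}}^{T})_{ij}$. So the statement reduces entirely to computing this entry.

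Writing $\mathbf{C}=\{C_{1},\dots,C_{m}\}$, I would apply the boolean product formula:
\begin{equation*}
(M_{\mathbf{C}}\cdot M_{\mathbf{C}}^{T})_{ij}
=\bigvee_{k=1}^{m}\bigl((M_{\mathbf{C}})_{ik}\wedge (M_{\mathbf{C}}^{T})_{kj}\bigr)
=\bigvee_{k=1}^{m}\bigl((M_{\mathbf{C}})_{ik}\wedge (M_{\mathbf{C}})_{jk}\bigr).
\end{equation*}
By the definition of $M_{\mathbf{C}}$, the $k$-th term of this disjunction equals $1$ iff $x_{i}\in C_{k}$ and $x_{j}\in C_{k}$. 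Hence the whole disjunction equals $1$ iff there exists some index $k$, equivalently some $C\in\mathbf{C}$, containing both $x_{i}$ and $x_{j}$.

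For the ``$\Longrightarrow$'' direction, I would start from $(x_{i},x_{j})\in R_{\mathbf{C}}$, which gives $(M_{\mathbf{C}}\cdot M_{\mathbf{C}}^{T})_{ij}=1$, and then invoke the computation above to extract the witnessing $k$ and set $C=C_{k}$. For the ``$\Longleftarrow$'' direction, assuming $x_{i},x_{j}\in C=C_{k}$ for some $k$, the $k$-th term of the disjunction is already $1\wedge 1=1$, forcing the whole entry to $1$, and hence $(x_{i},x_{j})\in R_{\mathbf{C}}$.

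There is no real obstacle here: the proof is a one-line unpacking of the boolean product definition together with the correspondence between square boolean matrices and binary relations recalled just before the statement. The only thing to be a bit careful about is writing $M_{\mathbf{C}}^{T}$ versus $M_{\mathbf{C}}$ in the indices so that the product is correctly identified with ``sharing a block $C_{k}$''; once that transposition is handled, the biconditional is immediate.
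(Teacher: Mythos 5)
Your proof is correct and follows essentially the same route as the paper's: both compute $(M_{\mathbf{C}}\cdot M_{\mathbf{C}}^{T})_{ij}=\bigvee_{k}\bigl((M_{\mathbf{C}})_{ik}\wedge (M_{\mathbf{C}})_{jk}\bigr)$ and observe that the $k$-th term is $1$ exactly when $x_{i}$ and $x_{j}$ share the block $C_{k}$. The transposition bookkeeping you flag is precisely the step the paper handles by rewriting $a_{kj}$ as $a_{jk}$, so there is nothing further to add.
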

\begin{proof}
Denote $M_{\mathbf{F}}=\left(
\begin{matrix}
a_{1}\\
\vdots\\
a_{n}
\end{matrix}\right)$, where $a_{i}$ is a $m$-dimensional row vector.
Denote $T_{\mathbf{F}}=(t_{ij})_{n\times n}=M_{\mathbf{F}}\cdot M_{\mathbf{F}}^{T}=(t_{ij})_{n\times n}$.\\
For all $x_{i}, x_{j}\in U$, $(x_{i}, x_{j})\in R_{\mathbf{C}}$\\
$\Leftrightarrow t_{ij}=\vee_{k=1}^{m}(a_{ik}\wedge a_{kj})=\vee_{k=1}^{m}(a_{ik}\wedge a_{jk})=1$\\
$\Leftrightarrow a_{i}\wedge a_{j}\neq [0, \cdots, 0]$\\
$\Leftrightarrow$ there exists $C\in\mathbf{C}$ such that $x_{i}, x_{j}\in C$.
\end{proof}

It is interesting that the type-1 characteristic matrix of a covering is the relational matrix of the relation induced by indiscernible neighborhoods of the covering.

\begin{theorem}
\label{T:Discernibleneiborhoodandtype1}
Let $\mathbf{C}$ be a covering of $U$ and $R_{\mathbf{C}}$ the relation induced by $\Gamma(\mathbf{C})$.
For all $x, y\in U$, $(x, y)\in R_{\mathbf{C}}$ iff $y\in I_{\mathbf{C}}(x)=\bigcup\{K\in\mathbf{C}|x\in K\}$.
\end{theorem}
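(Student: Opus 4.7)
The plan is to reduce this theorem directly to Proposition \ref{Prelationandset}, which already characterizes membership in $R_{\mathbf{C}}$ in terms of covering blocks. The statement to prove is a chain of equivalences, and the middle link is exactly that proposition, so only an unpacking of the definition of $I_{\mathbf{C}}(x)$ is required on each side.

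First, I would fix $x, y \in U$ and apply Proposition \ref{Prelationandset} to rewrite $(x, y) \in R_{\mathbf{C}}$ as the existence of some $K \in \mathbf{C}$ with $x, y \in K$. Second, I would observe that such a $K$ exists if and only if $y$ lies in the union $\bigcup \{K \in \mathbf{C} \mid x \in K\}$: the forward direction is immediate because $K$ itself witnesses that $y$ lies in the union, and the reverse direction is immediate because if $y$ belongs to the union then $y$ belongs to one of the sets $K$ in the family, and that $K$ by construction contains $x$. By definition this union is $I_{\mathbf{C}}(x)$, so the two equivalences compose into the desired statement.

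There is essentially no obstacle here once Proposition \ref{Prelationandset} is in hand; the only thing to be careful about is stating the two directions cleanly and not conflating the universal quantifier over $K$ in the definition of $N_{\mathbf{C}}(x)$ (the intersection) with the existential quantifier that appears in $I_{\mathbf{C}}(x)$ (the union). Since the theorem concerns $I_{\mathbf{C}}(x)$, the existential in Proposition \ref{Prelationandset} lines up with it exactly.
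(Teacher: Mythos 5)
Your proposal is correct and follows essentially the same route as the paper: both proofs invoke Proposition~\ref{Prelationandset} to translate $(x,y)\in R_{\mathbf{C}}$ into the existence of a common block $K\in\mathbf{C}$ containing $x$ and $y$, and then unpack the union defining $I_{\mathbf{C}}(x)$ in both directions.
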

\begin{proof}
($\Longrightarrow$): According to Proposition~\ref{Prelationandset}, if $(x, y)\in R_{\mathbf{C}}$, then there exists $C\in\mathbf{C}$ such that $x, y\in C$.
Hence $y\in C\subseteq\bigcup\{K\in\mathbf{C}|x\in K\}=I_{\mathbf{C}}(x)$.\\
($\Longleftarrow$): If $y\in I_{\mathbf{C}}(x)=\bigcup\{K\in\mathbf{C}|x\in K\}$, then there exists $C\in\mathbf{C}$ such that $x\in C$ and $y\in C$, which implies $(x, y)\in R_{\mathbf{C}}$.
\end{proof}

As we know, equivalence relations and partitions are determined by each other.
Therefore, a question arises: what is the relationship between the type-1 characteristic matrix of a partition and the relational matrix of its corresponding equivalence relation?

\begin{corollary}
Let $R$ be an equivalence relation on $U$ and $M_{R}$ the relational matrix of $R$.
Then $\Gamma(U/R)=M_{R}$.
\end{corollary}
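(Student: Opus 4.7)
The plan is to reduce the statement directly to Theorem~\ref{T:Discernibleneiborhoodandtype1}. First I would observe that since $R$ is an equivalence relation, $U/R$ is a partition of $U$, hence in particular a covering (no block is empty and the union of all blocks is $U$). This lets me apply the earlier theorem with $\mathbf{C} = U/R$.

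Next I would compute the indiscernible neighborhood $I_{U/R}(x)$ explicitly. Because the blocks of $U/R$ are mutually disjoint and each $x\in U$ lies in exactly one equivalence class $[x]_R$, the family $\{K\in U/R \mid x\in K\}$ is the singleton $\{[x]_R\}$, so $I_{U/R}(x) = [x]_R$.

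Then I would invoke Theorem~\ref{T:Discernibleneiborhoodandtype1}: letting $R_{U/R}$ be the relation induced by $\Gamma(U/R)$, for all $x,y\in U$ we have $(x,y)\in R_{U/R}$ iff $y\in I_{U/R}(x) = [x]_R$, which is in turn equivalent to $(x,y)\in R$. Thus $R_{U/R}=R$, and since the correspondence between binary relations on $U$ and $|U|\times |U|$ boolean matrices is one-to-one, $\Gamma(U/R) = M_{R_{U/R}} = M_R$.

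There is essentially no obstacle here; the only point requiring a brief remark is the reduction $I_{U/R}(x) = [x]_R$, which uses the disjointness of the blocks of a partition. Everything else is a direct application of the already-proved theorem and the bijection between relations and their matrices.
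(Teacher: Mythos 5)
Your proposal is correct and follows essentially the same route as the paper: the paper likewise reduces the claim to showing $R_{U/R}=R$ via the identity $I_{U/R}(x)=\bigcup\{K\in U/R \mid x\in K\}=[x]_{R}=\{y\in U \mid (x,y)\in R\}$ and Theorem~\ref{T:Discernibleneiborhoodandtype1}. Your version merely spells out the partition-singleton observation and the relation--matrix bijection that the paper leaves implicit.
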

\begin{proof}
We only need to prove $R_{U/R}=R$.
In fact, it is straightforward since $I_{U/R}(x)=\bigcup\{K\in U/R|x\in K\}=[x]_{R}=\{y\in U|(x, y)\in R\}$.
\end{proof}

The above corollary shows that the type-1 characteristic matrix of a partition coincides with the relational matrix of its corresponding equivalence relation.
The following corollary considers another question: which covering blocks removed have no effect on the type-1 characteristic matrix.

\begin{corollary}
Let $\mathbf{C}$ be a covering of $U$ and $K\in\mathbf{C}$.
If there exists $K'\in\mathbf{C}-\{K\}$ such that $K\subseteq K'$, then $\Gamma(\mathbf{C})=\Gamma(\mathbf{C}-\{K\})$.
\end{corollary}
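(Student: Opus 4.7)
The plan is to reduce the equality of characteristic matrices to an equality of the binary relations they induce, and then to a pointwise equality of indiscernible neighborhoods. Since $\Gamma(\mathbf{C})$ is a square boolean matrix, it corresponds bijectively to a relation $R_{\mathbf{C}}$ on $U$; by Theorem~\ref{T:Discernibleneiborhoodandtype1}, this relation is exactly $\{(x,y) : y \in I_{\mathbf{C}}(x)\}$. The same description applies to $R_{\mathbf{C}-\{K\}}$, provided $\mathbf{C}-\{K\}$ is still a covering. So the proof will have three steps: (i) check that $\mathbf{C}-\{K\}$ remains a covering; (ii) show $I_{\mathbf{C}}(x) = I_{\mathbf{C}-\{K\}}(x)$ for every $x \in U$; (iii) conclude that the two relational matrices, hence the two type-1 characteristic matrices, coincide.

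For step (i), since $K \subseteq K'$ and $K' \in \mathbf{C}-\{K\}$, we have $\bigcup(\mathbf{C}-\{K\}) \supseteq \bigl(\bigcup \mathbf{C}\bigr) \setminus K \cup K' = U$, so $\mathbf{C}-\{K\}$ covers $U$ (and contains no empty set). For step (ii), I would split on whether $x \in K$. If $x \notin K$, then the family $\{C \in \mathbf{C} : x \in C\}$ is unchanged by removing $K$, so $I_{\mathbf{C}}(x) = I_{\mathbf{C}-\{K\}}(x)$ trivially. If $x \in K$, then also $x \in K'$ because $K \subseteq K'$, so removing $K$ from the family $\{C \in \mathbf{C} : x \in C\}$ deletes only the set $K$, which is absorbed into $K'$ already in the union; hence $I_{\mathbf{C}}(x) = I_{\mathbf{C}-\{K\}}(x)$ in this case as well.

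For step (iii), having $I_{\mathbf{C}}(x) = I_{\mathbf{C}-\{K\}}(x)$ for every $x$ gives $R_{\mathbf{C}} = R_{\mathbf{C}-\{K\}}$ by Theorem~\ref{T:Discernibleneiborhoodandtype1}, and hence $M_{R_{\mathbf{C}}} = M_{R_{\mathbf{C}-\{K\}}}$. Since $\Gamma(\mathbf{C}) = M_{R_{\mathbf{C}}}$ and $\Gamma(\mathbf{C}-\{K\}) = M_{R_{\mathbf{C}-\{K\}}}$ by definition of the type-1 characteristic matrix together with the one-to-one correspondence between boolean square matrices and binary relations, the conclusion follows.

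There is no real obstacle here; the only point requiring care is the case $x \in K$ in step (ii), where one must invoke the hypothesis $K \subseteq K'$ to see that deleting $K$ does not shrink the union. As a stylistic alternative, one could bypass neighborhoods and argue directly from Proposition~\ref{Prelationandset}: for any pair $x_i, x_j \in U$, if some $C \in \mathbf{C}$ contains both of them and $C = K$, then $K' \in \mathbf{C}-\{K\}$ also contains both, so the existence quantifier is preserved; the reverse implication is immediate from $\mathbf{C}-\{K\}\subseteq \mathbf{C}$. Either route yields the claim in a few lines.
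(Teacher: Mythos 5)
Your proof is correct; the paper states this corollary without an explicit proof, and your derivation — reducing $\Gamma(\mathbf{C})=\Gamma(\mathbf{C}-\{K\})$ to the equality of the induced relations via Theorem~\ref{T:Discernibleneiborhoodandtype1} (or, more directly, Proposition~\ref{Prelationandset}), with the only substantive point being that a witness block $K$ containing both $x_i$ and $x_j$ can be replaced by $K'\supseteq K$ — is exactly the intended route given where the corollary sits in the text. The check that $\mathbf{C}-\{K\}$ is still a covering is a worthwhile detail the paper glosses over.
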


The above corollary indicates that those smaller covering blocks removed have no effect on the type-1 characteristic matrix.

\subsection{Type-2 characteristic matrix of covering}
\label{S:matrixrepresentingofneighborhoodbasedroughset}

The matrix representation of a covering is a framework to study covering-based rough sets and it inherits essential information of the covering.
This subsection constructs a new operation between boolean matrices to study covering-based rough sets in this framework.

\begin{definition}
Let $A = (a_{ij})_{n\times m}$ and $B = (b_{ij})_{m\times p}$ be two boolean matrices.
We define $C = A\odot B$ as follows: $C = (c_{ij})_{n\times p}$,
\begin{center}
$c_{ij}=\wedge_{k=1}^{m}(b_{kj} - a_{ik} + 1)$.
\end{center}
\end{definition}

It is worth noting that $A$ and $B$ are boolean matrices, however $A\odot B$ may not be a boolean matrix.
The following counterexample indicates this argument.

\begin{example}
Suppose $A = \underbrace{(0, \cdots, 0)}_{k}$ and $B = \underbrace{(1, \cdots, 1)}_{k}$, then $A\odot B^{T} = (2)$.
\end{example}

It is interesting that the new operation of any matrix representing a covering and its transpose is a boolean matrix.

\begin{proposition}
Let $\mathbf{C}$ be a covering of $U$ and $M_{\mathbf{C}}$ a matrix representing $\mathbf{C}$.
Then $M_{\mathbf{C}}\odot M_{\mathbf{C}}^{T}$ is a boolean matrix.
\end{proposition}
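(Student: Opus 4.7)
The plan is to unpack the definition of $\odot$ and exploit the covering property to bound each entry of $M_{\mathbf{C}} \odot M_{\mathbf{C}}^{T}$ between $0$ and $1$.

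First I would write $M_{\mathbf{C}} = (a_{ij})_{n \times m}$, so $M_{\mathbf{C}}^{T} = (a_{ji})_{m \times n}$, and denote $C = M_{\mathbf{C}} \odot M_{\mathbf{C}}^{T} = (c_{ij})_{n \times n}$. By the definition of $\odot$, each entry is
\begin{equation*}
c_{ij} = \bigwedge_{k=1}^{m}(a_{jk} - a_{ik} + 1).
\end{equation*}
Since $a_{ik}, a_{jk} \in \{0,1\}$, each individual factor $a_{jk} - a_{ik} + 1$ lies in $\{0, 1, 2\}$: it equals $0$ precisely when $a_{ik}=1, a_{jk}=0$; it equals $2$ precisely when $a_{ik}=0, a_{jk}=1$; and it equals $1$ in the remaining two cases. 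In particular the minimum over $k$ is always nonnegative, so $c_{ij} \geq 0$.

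The key observation, and the step where the covering hypothesis enters, is that for every $i$ the $i$-th row of $M_{\mathbf{C}}$ is nonzero: since $\bigcup \mathbf{C} = U$, there must exist some index $k_{0}$ with $x_{i} \in F_{k_{0}}$, i.e., $a_{ik_{0}} = 1$. For this particular $k_{0}$ the factor becomes $a_{jk_{0}} - 1 + 1 = a_{jk_{0}} \in \{0,1\}$, so this single factor in the conjunction is at most $1$. Hence $c_{ij} \leq 1$.

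Combining the two bounds yields $c_{ij} \in \{0,1\}$ for every $i,j$, which is exactly the conclusion that $M_{\mathbf{C}} \odot M_{\mathbf{C}}^{T}$ is a boolean matrix. I do not expect any substantive obstacle here; the only subtlety is recognising that Proposition~\ref{P:maindiagonalequaltoone} (or more precisely the definition of a covering) is exactly what rules out the counterexample shown just before the statement, where the all-zero row $A = (0,\ldots,0)$ produced an entry equal to $2$. No such all-zero row can occur when $M_{\mathbf{C}}$ represents a covering.
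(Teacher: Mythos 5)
Your proof is correct and follows essentially the same route as the paper: both arguments reduce to the observation that the covering property forces every row of $M_{\mathbf{C}}$ to contain a $1$, so some factor $a_{jk_{0}}-a_{ik_{0}}+1$ in the conjunction is at most $1$, ruling out the value $2$. The paper phrases this as a contradiction (assuming an entry equals $2$ forces $a_{ik}=0$ for all $k$), while you give the direct bound, but the key idea is identical.
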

\begin{proof}
It is straightforward that $c_{ij}\in\{0, 1, 2\}$.
We need to prove $c_{ij}\neq 2$ for all $i, j\in\{1, \cdots, n\}$.\\
$M_{\mathbf{C}}\odot M_{\mathbf{C}}^{T}=\left(
\begin{matrix}
a_{1}\\
\vdots\\
a_{n}
\end{matrix}\right)\odot(a_{1}^{T}, \cdots, a_{n}^{T})=\left(
\begin{matrix}
a_{1}\odot a_{1}^{T} & \cdots & a_{1}\odot a_{n}^{T}\\
\vdots &&\vdots\\
a_{n}\odot a_{1}^{T} & \cdots & a_{n}\odot a_{n}^{T}
\end{matrix}\right) \triangleq (t_{ij})_{n\times n}$.
$t_{ij}=a_{i}\odot a_{j}^{T}=\wedge_{k=1}^{m}(a_{kj} - a_{ik} + 1)=\wedge_{k=1}^{m}(a_{jk} - a_{ik} + 1)$.
If $t_{ij}=2$, then $\wedge_{k=1}^{m}(a_{jk} - a_{ik} + 1)=2$, which implies $a_{jk}=1$ and $a_{ik}=0$ for all $k\in\{1, \cdots, m\}$.
In other words, $x_{i}\notin C_{k}$ for all $k\in\{1, \cdots, m\}$, which is contradictory that $\mathbf{C}$ is a covering of $U$.
Therefore, $t_{ij}\in\{0, 1\}$, i.e., $M_{\mathbf{C}}\odot M_{\mathbf{C}}^{T}$ is a boolean matrix.
\end{proof}

The following proposition points out that the new operation of any matrix representing a covering and its transpose is the same once an order of elements of the universe is given.

\begin{proposition}
Let $\mathbf{C}$ be a covering of $U$ and $M_{1}$, $M_{2}$ matrices representing $\mathbf{C}$.
Then $M_{1}\odot M_{1}^{T}=M_{2}\odot M_{2}^{T}$.
\end{proposition}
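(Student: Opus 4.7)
The plan is to mimic the argument used earlier for the boolean product $M\cdot M^{T}$. Since $M_{1}$ and $M_{2}$ both represent the same family $\mathbf{C}=\{C_{1},\dots,C_{m}\}$ over the same ordered universe $U=\{x_{1},\dots,x_{n}\}$, the only freedom left is the order in which the sets $C_{j}$ are listed as columns. Hence $M_{2}$ is obtained from $M_{1}$ by a permutation of columns, and any permutation is a composition of transpositions, so it suffices to prove the equality in the special case where $M_{2}$ differs from $M_{1}$ by swapping two columns $i<j$.

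Writing $M_{1}=(a_{1},\dots,a_{i},\dots,a_{j},\dots,a_{m})$ and $M_{2}=(a_{1},\dots,a_{j},\dots,a_{i},\dots,a_{m})$ with $a_{k}$ an $n$-dimensional column vector, I would then unfold the definition of $\odot$. By the preceding proposition, the $(r,s)$ entry of $M\odot M^{T}$ is $\wedge_{k=1}^{m}(M_{sk}-M_{rk}+1)$, i.e.\ a conjunction over the column index $k$. The crucial observation is that swapping columns $i$ and $j$ in $M$ merely permutes the factors of this conjunction, and $\wedge$ is commutative and associative, so the value of the conjunction is unaffected. This gives $(M_{1}\odot M_{1}^{T})_{rs}=(M_{2}\odot M_{2}^{T})_{rs}$ entry by entry.

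The main obstacle, such as it is, is purely bookkeeping: making clear that the index $k$ in the formula $\wedge_{k=1}^{m}(M_{sk}-M_{rk}+1)$ ranges precisely over the set of columns (i.e.\ over the blocks of $\mathbf{C}$), so that a column permutation corresponds to a reindexing of the conjunction. Once that is made explicit, the conclusion is immediate, and no extra hypothesis (such as the covering condition used to ensure $M\odot M^{T}$ is boolean) is needed for this step. One could shorten the write-up by simply invoking the commutativity of $\wedge$ over a finite index set and observing that the formula depends only on the multiset $\{a_{1},\dots,a_{m}\}$ of columns, not on their order.
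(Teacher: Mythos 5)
Your argument is correct and is exactly the intended one: the paper states this proposition without proof, evidently because it is the $\odot$-analogue of the earlier proposition for $M\cdot M^{T}$, whose proof uses the same reduction to a single column transposition. Your observation that $(M\odot M^{T})_{rs}=\wedge_{k=1}^{m}(M_{sk}-M_{rk}+1)$ depends only on the multiset of columns (and that the covering hypothesis is not needed here) correctly fills the gap.
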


\begin{definition}(Type-2 characteristic matrix of covering)
Let $\mathbf{C}$ be a covering of $U$.
Then $M_{\mathbf{C}}\odot M_{\mathbf{C}}^{T}$ is called type-2 characteristic matrix of $\mathbf{C}$, denoted as $\Pi(\mathbf{C})$.
\end{definition}

The following definition introduces an approach to generating a relation from a covering.
There is a close relationship between this relation and neighborhood-based rough sets.

\begin{definition}(Relation induced by a covering~\cite{Zhu09RelationshipBetween})
Let $\mathbf{C}$ be a covering of $U$.
One defines the relation $R(\mathbf{C})$ on $U$ as follows: for all $x, y\in U$,
\begin{center}
$(x, y)\in R(\mathbf{C})\Longleftrightarrow y\in N_{\mathbf{C}}(x)$.
\end{center}
\end{definition}

The type-2 characteristic matrix of a covering is the relational matrix of the relation induced by neighborhoods.

\begin{theorem}
\label{T:relexiveandtransitiverelationwithbooleanmatrix}
Let $\mathbf{C}$ be a covering of $U$.
Then $\Pi(\mathbf{C})$ is the relational matrix of $R(\mathbf{C})$.
\end{theorem}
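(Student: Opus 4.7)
The plan is to directly unpack both sides of the claimed equality by computing a generic entry of $\Pi(\mathbf{C})$ and matching it to the membership condition $y \in N_{\mathbf{C}}(x)$. Write $M_{\mathbf{C}} = ((M_{\mathbf{C}})_{ik})_{n \times m}$, where $(M_{\mathbf{C}})_{ik} = 1$ iff $x_i \in C_k$, and let $t_{ij}$ denote the $(i,j)$-entry of $\Pi(\mathbf{C}) = M_{\mathbf{C}} \odot M_{\mathbf{C}}^{T}$. By definition of $\odot$, we have $t_{ij} = \wedge_{k=1}^{m}\bigl((M_{\mathbf{C}})_{jk} - (M_{\mathbf{C}})_{ik} + 1\bigr)$, and the preceding proposition guarantees $t_{ij} \in \{0,1\}$.

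The key observation is that each factor $(M_{\mathbf{C}})_{jk} - (M_{\mathbf{C}})_{ik} + 1$ takes the value $0$ precisely when $(M_{\mathbf{C}})_{ik} = 1$ and $(M_{\mathbf{C}})_{jk} = 0$, and it takes the value $1$ or $2$ otherwise. Hence $t_{ij} = 1$ iff for every $k \in \{1, \dots, m\}$, $(M_{\mathbf{C}})_{ik} \leq (M_{\mathbf{C}})_{jk}$, which translates to the set-theoretic statement $x_i \in C_k \Rightarrow x_j \in C_k$ for all $k$. In other words, $x_j$ lies in every covering block that contains $x_i$.

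The final step is to identify this condition with the neighborhood. By definition $N_{\mathbf{C}}(x_i) = \bigcap\{K \in \mathbf{C} \mid x_i \in K\}$, so $x_j \in N_{\mathbf{C}}(x_i)$ is exactly the statement that $x_j$ belongs to every $C_k$ with $x_i \in C_k$. Combining this with the defining equivalence $(x_i, x_j) \in R(\mathbf{C}) \iff x_j \in N_{\mathbf{C}}(x_i)$ yields $t_{ij} = 1 \iff (x_i, x_j) \in R(\mathbf{C})$, which is exactly the assertion that $\Pi(\mathbf{C})$ is the relational matrix of $R(\mathbf{C})$.

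I expect no serious obstacle here: the argument is essentially a bookkeeping translation between the arithmetic definition of $\odot$ and the intersection defining $N_{\mathbf{C}}$. The only subtle point worth stating carefully is the reason each factor never forces $t_{ij}$ to be $2$ in the boolean sense, but this is already handled by the earlier proposition, so it can simply be cited.
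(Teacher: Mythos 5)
Your proof is correct and follows essentially the same route as the paper's: both unpack the entry $t_{ij}=\wedge_{k=1}^{m}\bigl((M_{\mathbf{C}})_{jk}-(M_{\mathbf{C}})_{ik}+1\bigr)$, observe that a factor vanishes exactly when $x_i\in C_k$ but $x_j\notin C_k$, and match the resulting implication to $x_j\in N_{\mathbf{C}}(x_i)$. The only cosmetic difference is that you phrase it as a single chain of equivalences (citing the earlier proposition to exclude the value $2$), while the paper argues the two cases $t_{ij}=1$ and $t_{ij}=0$ separately.
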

\begin{proof}
Denote $M_{\mathbf{C}}=\left(
\begin{matrix}
a_{1}\\
\vdots\\
a_{n}
\end{matrix}\right)$ and $\Pi(\mathbf{C})=M_{\mathbf{C}}\odot M_{\mathbf{C}}^{T}=(t_{ij})_{n\times n}$.
If $t_{ij}=1$, then $\wedge_{k=1}^{m}(a_{jk}-a_{ik}+1)=1$, which implies if $a_{ik}=1$, then $a_{jk}=1$.
In other words, if $x_{i}\in C_{k}$, then $x_{j}\in C_{k}$.
Hence $x_{j}\in\bigcap\{K\in\mathbf{C}|x_{i}\in K\}=N_{\mathbf{C}}(x_{i})$, i.e., $(x_{i}, x_{j})\in R(\mathbf{C})$.
If $t_{ij}=0$, then $\wedge_{k=1}^{m}(a_{jk}-a_{ik}+1)=0$, which implies that there exists $k_{0}\in\{1, \cdots, m\}$ such that $a_{jk_{0}}=0$ and $a_{ik_{0}}=1$.
In other words, $x_{i}\in C_{k_{0}}$ and $x_{j}\notin C_{k_{0}}$.
Thus $x_{j}\notin C_{k_{0}}\supseteq \bigcap\{K\in\mathbf{C}|x_{i}\in K\}=N_{\mathbf{C}}(x_{i})$, which implies $x_{j}\notin N_{\mathbf{C}}(x_{i})$, i.e., $(x_{i}, x_{j})\notin R(\mathbf{C})$.
This completes the proof.
\end{proof}

The following proposition considers a question: which covering blocks removed have no effect on the type-2 characteristic matrix.

\begin{proposition}
Let $\mathbf{C}$ be a covering of $U$ and $K\in\mathbf{C}$.
If $K$ is reducible, then $\Pi(\mathbf{C})=\Pi(\mathbf{C}-\{K\})$.
\end{proposition}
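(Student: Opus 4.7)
By Theorem~\ref{T:relexiveandtransitiverelationwithbooleanmatrix}, $\Pi(\mathbf{C})$ is the relational matrix of $R(\mathbf{C})$ and $\Pi(\mathbf{C}-\{K\})$ is the relational matrix of $R(\mathbf{C}-\{K\})$. Since the correspondence between binary relations on $U$ and $|U|\times|U|$ boolean matrices is one-to-one, the plan is to reduce the claim to the set-theoretic identity $R(\mathbf{C}) = R(\mathbf{C}-\{K\})$, and in turn to showing that the neighborhood operator is unchanged by removing a reducible element, i.e.\ $N_{\mathbf{C}}(x) = N_{\mathbf{C}-\{K\}}(x)$ for every $x\in U$.

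First I would dispatch the easy inclusion: since $\mathbf{C}-\{K\}\subseteq \mathbf{C}$, the intersection defining $N_{\mathbf{C}}(x)$ is taken over a (weakly) larger family than that defining $N_{\mathbf{C}-\{K\}}(x)$, so $N_{\mathbf{C}}(x)\subseteq N_{\mathbf{C}-\{K\}}(x)$ for all $x$. Note also that $\mathbf{C}-\{K\}$ is still a covering of $U$: because $K$ is reducible, $K$ is a union of other sets in $\mathbf{C}$, so $\bigcup(\mathbf{C}-\{K\})=\bigcup \mathbf{C}=U$, and hence $N_{\mathbf{C}-\{K\}}(x)$ is well-defined.

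For the reverse inclusion I would split on whether $x\in K$. If $x\notin K$, then $K$ is not among the sets being intersected in the definition of $N_{\mathbf{C}}(x)$, so removing $K$ changes nothing and $N_{\mathbf{C}-\{K\}}(x) = N_{\mathbf{C}}(x)$. If $x\in K$, use reducibility: write $K = \bigcup_{j\in J}K_j$ with $K_j\in\mathbf{C}-\{K\}$. Since $x\in K$, there is some $j_0\in J$ with $x\in K_{j_0}$, and $K_{j_0}$ participates in the intersection defining $N_{\mathbf{C}-\{K\}}(x)$, which yields $N_{\mathbf{C}-\{K\}}(x)\subseteq K_{j_0}\subseteq K$. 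Thus $K$ contributes nothing new to $N_{\mathbf{C}}(x) = K \cap N_{\mathbf{C}-\{K\}}(x)$, so the two neighborhoods coincide.

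Combining both inclusions gives $N_{\mathbf{C}}=N_{\mathbf{C}-\{K\}}$ pointwise, hence $R(\mathbf{C})=R(\mathbf{C}-\{K\})$, and therefore $\Pi(\mathbf{C})=\Pi(\mathbf{C}-\{K\})$. The only mildly subtle step is the case $x\in K$: one must remember that reducibility guarantees not just that $K$ is covered by the remaining blocks, but that every point of $K$ already sits inside some $K_j\in\mathbf{C}-\{K\}$ with $K_j\subseteq K$, which is precisely what makes $K$ contribute no new constraint to the intersection.
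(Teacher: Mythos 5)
Your proposal is correct and follows the same route as the paper: the paper's proof is the one-line observation that $N_{\mathbf{C}}(x)=N_{\mathbf{C}-\{K\}}(x)$ for all $x\in U$ when $K$ is reducible (combined, implicitly, with Theorem~\ref{T:relexiveandtransitiverelationwithbooleanmatrix}), which is exactly your reduction. You simply fill in the details the paper labels ``straightforward,'' including the worthwhile check that $\mathbf{C}-\{K\}$ remains a covering.
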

\begin{proof}
It is straightforward since $N_{\mathbf{C}}(x)=N_{\mathbf{C}-\{K\}}(x)$ for all $x\in U$ if $K$ is reducible in $\mathbf{C}$.
\end{proof}

The above proposition presents that reducible elements of a covering removed have no effect type-2 characteristic matrix.
It is natural that a covering and its reduct have the same type-2 characteristic matrix.

\begin{proposition}
Let $\mathbf{C}$ be a covering of $U$.
Then $\Pi(\mathbf{C})=\Pi(Reduct(\mathbf{C}))$.
\end{proposition}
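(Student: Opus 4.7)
The plan is to bootstrap the preceding proposition into the full statement by iterated removal. Initialize $\mathbf{C}^{(0)} := \mathbf{C}$, and as long as $\mathbf{C}^{(i)}$ contains some reducible element $K_{i}$, set $\mathbf{C}^{(i+1)} := \mathbf{C}^{(i)} - \{K_{i}\}$. Since $\mathbf{C}$ is finite and each step strictly decreases the size of the family, this process terminates at some $\mathbf{C}^{(N)}$ in which every remaining block is irreducible. At each step the preceding proposition applies and yields $\Pi(\mathbf{C}^{(i)}) = \Pi(\mathbf{C}^{(i+1)})$, so telescoping gives $\Pi(\mathbf{C}) = \Pi(\mathbf{C}^{(N)})$.

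What remains is to identify $\mathbf{C}^{(N)}$ with $Reduct(\mathbf{C})$. This requires two auxiliary facts: (a) any element that is irreducible in $\mathbf{C}$ is still irreducible in each $\mathbf{C}^{(i)}$, which is immediate since a decomposition inside $\mathbf{C}^{(i)} \subseteq \mathbf{C}$ would also be a decomposition inside $\mathbf{C}$; and (b) any element that is reducible in $\mathbf{C}$ is still reducible in $\mathbf{C}^{(i)}$ until it is itself removed, so the greedy process cannot get stuck before eliminating all of them. Together (a) and (b) force $\mathbf{C}^{(N)} = Reduct(\mathbf{C})$, completing the argument.

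The main obstacle is (b), the confluence of the reduction procedure, which is the substantive content of Zhu and Wang's work~\cite{ZhuWang03Reduction} and can be cited directly. For a self-contained argument, suppose $K = \bigcup \mathcal{F}$ with $\mathcal{F} \subseteq \mathbf{C} - \{K\}$ and we wish to remove another reducible element $K' = \bigcup \mathcal{F}'$ with $\mathcal{F}' \subseteq \mathbf{C} - \{K'\}$. If $K' \notin \mathcal{F}$, the decomposition of $K$ survives verbatim in $\mathbf{C} - \{K'\}$. If $K' \in \mathcal{F}$, substitute to obtain $K = \bigcup\bigl((\mathcal{F} - \{K'\}) \cup \mathcal{F}'\bigr)$; a short check rules out $K \in \mathcal{F}'$ (else $K \subseteq K' \subseteq K$, forcing the contradiction $K = K'$), so this is a valid decomposition inside $\mathbf{C} - \{K, K'\}$. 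Induction on the number of removed elements then gives (b). As an alternative route that avoids (b) entirely, one can establish $N_{\mathbf{C}}(x) = N_{Reduct(\mathbf{C})}(x)$ for every $x \in U$ by descending through successive reducible decompositions until reaching an irreducible block containing $x$, and then invoke Theorem~\ref{T:relexiveandtransitiverelationwithbooleanmatrix} to transfer the equality to characteristic matrices.
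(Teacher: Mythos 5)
Your proof is correct and follows the route the paper intends: the paper states this proposition without proof, presenting it as an immediate consequence of the preceding one (removing a single reducible element preserves $\Pi$), and your iterated-removal argument together with the confluence lemma (b) --- guaranteeing that greedy removal never strands a reducible element and terminates exactly at $Reduct(\mathbf{C})$ --- supplies precisely the details the paper leaves implicit. Both your substitution argument for (b) and your alternative direct verification that $N_{\mathbf{C}}(x)=N_{Reduct(\mathbf{C})}(x)$ are sound.
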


The fifth covering upper approximation operator can be concisely represented by boolean matrices.
$\chi_{_Y}$ is used to denote the characteristic function of $Y$ in $U$; in other words, for all $y\in U$, $\chi_{_Y}(y)=1$ if and only if $y\in Y$.

\begin{theorem}
Let $\mathbf{C}$ be a covering of $U$.
Then for all $X\subseteq U$,
\begin{center}
$\chi_{_{IH(X)}}=\Pi(\mathbf{C})\cdot \chi_{_X}$.
\end{center}
\end{theorem}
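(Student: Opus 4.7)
The plan is to chase both sides component-by-component, using Theorem~\ref{T:relexiveandtransitiverelationwithbooleanmatrix} to translate the matrix $\Pi(\mathbf{C})$ into the language of neighborhoods. Both $\chi_{_{IH(X)}}$ and $\Pi(\mathbf{C})\cdot \chi_{_X}$ are column vectors of length $n=|U|$, so it suffices to verify that their $i$-th entries coincide for every $i \in \{1,\ldots,n\}$.

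First I would unfold the boolean matrix--vector product. Writing $\Pi(\mathbf{C}) = (t_{ij})_{n \times n}$, the $i$-th entry of $\Pi(\mathbf{C}) \cdot \chi_{_X}$ equals $\bigvee_{j=1}^{n}\bigl(t_{ij} \wedge \chi_{_X}(x_j)\bigr)$. By Theorem~\ref{T:relexiveandtransitiverelationwithbooleanmatrix}, $t_{ij}=1$ if and only if $(x_i,x_j) \in R(\mathbf{C})$, which by definition of $R(\mathbf{C})$ is equivalent to $x_j \in N_{\mathbf{C}}(x_i)$. Therefore the $i$-th entry simplifies to $1$ exactly when there is some $x_j$ with $x_j \in N_{\mathbf{C}}(x_i)$ and $x_j \in X$, i.e.\ exactly when $N_{\mathbf{C}}(x_i) \cap X \neq \emptyset$.

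Next I would compare this with $\chi_{_{IH(X)}}(x_i)$. By the definition of the fifth upper approximation, $x_i \in IH(X) \iff N(x_i) \cap X \neq \emptyset$, so $\chi_{_{IH(X)}}(x_i)=1$ under exactly the same condition derived above. The two vectors therefore agree coordinatewise, which gives the claimed identity.

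I do not anticipate a serious obstacle: once Theorem~\ref{T:relexiveandtransitiverelationwithbooleanmatrix} is invoked, the verification is just the observation that boolean matrix--vector multiplication with the relational matrix of a relation $R$ computes, in each coordinate, the existential quantifier ``$\exists x_j \in X$ with $x_i \mathrel{R} x_j$''. The only thing to be careful about is the direction of $R(\mathbf{C})$, i.e.\ ensuring that the row index $i$ corresponds to the point $x$ whose neighborhood $N(x)$ is being tested against $X$ (rather than the reverse); this is exactly how $R(\mathbf{C})$ was defined, so the orientation works out.
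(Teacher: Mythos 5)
Your proof is correct and follows essentially the same route as the paper's: both expand the $i$-th entry of the boolean product as $\bigvee_{k}(t_{ik}\wedge\chi_{_X}(x_k))$, identify $t_{ik}=1$ with $x_k\in N_{\mathbf{C}}(x_i)$ via the relational-matrix characterization of $\Pi(\mathbf{C})$, and match the result against the definition $x_i\in IH(X)\iff N(x_i)\cap X\neq\emptyset$. The only cosmetic difference is that the paper treats $X=\emptyset$ as a separate (and strictly unnecessary) preliminary case, while you cite Theorem~\ref{T:relexiveandtransitiverelationwithbooleanmatrix} explicitly where the paper uses it implicitly.
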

\begin{proof}
Denote $M_{\mathbf{C}}\odot M_{\mathbf{C}}^{T}=(t_{ij})_{n\times n}$.
If $X = \emptyset$, then $\chi_{_{IH(X)}}=M_{\mathbf{C}}\odot M_{\mathbf{C}}^{T}\cdot [0, \cdots, 0]^{T}=[0, \cdots, 0]^{T}$, which implies $IH(X)=\emptyset$.\\
$x_{i}\in IH(X)$\\
$\Leftrightarrow\chi_{_{IH(X)}}(x_{i})=1$\\
$\Leftrightarrow \vee_{k=1}^{m}(t_{ik}\wedge \chi_{_X}(x_{k}))=1$\\
$\Leftrightarrow \exists k_{0}\in\{1, \cdots, m\}$ such that $t_{ik_{0}}=\chi_{_X}(x_{k_{0}})=1$\\
$\Leftrightarrow x_{k_{0}}\in N_{\mathbf{C}}(x_{i})$, $x_{k_{0}}\in X$\\
$\Leftrightarrow N_{\mathbf{C}}(x_{i})\bigcap X\neq\emptyset$.
\end{proof}

\begin{example}
\label{E:upperapproximation}
Let $U = \{a, b, c, d, e, f\}$ and $\mathbf{C} = \{K_{1}, K_{2}, K_{3}, K_{4}\}$ where $K_{1} = \{a, b\}$, $K_{2} = \{a, c, d\}$, $K_{3} = \{a, b, c, d\}$ and $K_{4} = \{d, e, f\}$.
Then \\$\Pi(\mathbf{C})=M_{\mathbf{C}}\odot M_{\mathbf{C}}^{T}=\left(
\begin{matrix}
1 & 1 & 1 & 0\\
1 & 0 & 1 & 0\\
0 & 1 & 1 & 0\\
0 & 1 & 1 & 1\\
0 & 0 & 0 & 1\\
0 & 0 & 0 & 1
\end{matrix}\right)\odot\left(
\begin{matrix}
1 & 1 & 0 & 0 & 0 & 0\\
1 & 0 & 1 & 1 & 0 & 0\\
1 & 1 & 1 & 1 & 0 & 0\\
0 & 0 & 0 & 1 & 1 & 1
\end{matrix}\right)=\left(
\begin{matrix}
1 & 0 & 0 & 0 & 0 & 0\\
1 & 1 & 0 & 0 & 0 & 0\\
1 & 0 & 1 & 1 & 0 & 0\\
0 & 0 & 0 & 1 & 0 & 0\\
0 & 0 & 0 & 1 & 1 & 1\\
0 & 0 & 0 & 1 & 1 & 1
\end{matrix}\right).$
\begin{center}
\begin{tabular}{|c|c|c|c|}
  \hline
  $X$ & $\chi_{_X}$ & $\Pi(\mathbf{C})\cdot \chi_{_X}$ & $IH(X)$\\
  \hline
  $\{a\}$             & $[1~0~0~0~0~0]^{T}$ & $[1~1~1~0~0~0]^{T}$ & $\{a, b, c\}$ \\
  $\{b, c\}$          & $[0~1~1~0~0~0]^{T}$ & $[0~1~1~0~0~0]^{T}$ & $\{b, c\}$ \\
  $\{a, d, e\}$       & $[1~0~0~1~1~0]^{T}$ & $[1~1~1~1~1~1]^{T}$ & $\{a, b, c, d, e, f\}$ \\
  $\{b, d, e, f\}$    & $[0~1~0~1~1~1]^{T}$ & $[0~1~1~1~1~1]^{T}$ & $\{b, c, d, e, f\}$ \\
  \hline
\end{tabular}
\end{center}
\end{example}

Similarly, the fifth covering lower approximation operator is also represented by the type-2 characteristic matrix.

\begin{theorem}
Let $\mathbf{C}$ be a covering of $U$.
Then for all $X\subseteq U$ and $X\neq\emptyset$,
\begin{center}
$\chi_{_{IL(X)}}=\Pi(\mathbf{C})\odot \chi_{_X}$.
\end{center}
\end{theorem}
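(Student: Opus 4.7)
The plan is to unpack the definition of $\odot$ pointwise and then translate the resulting inequalities back into the set-theoretic definition of $IL$, exactly parallel to the preceding theorem for $IH$ but with $\odot$ in place of the boolean product. Writing $\Pi(\mathbf{C})=(t_{ij})_{n\times n}$, the $i$-th entry of $\Pi(\mathbf{C})\odot\chi_{_X}$ unfolds directly from the definition as
\[
(\Pi(\mathbf{C})\odot\chi_{_X})_{i}=\wedge_{k=1}^{n}\bigl(\chi_{_X}(x_{k})-t_{ik}+1\bigr),
\]
with each factor lying in $\{0,1,2\}$.

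First I would verify that this conjunction is always $0$ or $1$, so that the output is genuinely a characteristic vector rather than a vector with some entry equal to $2$. By Proposition~\ref{P:maindiagonalequaltoone} (or equivalently, by the reflexivity of $R(\mathbf{C})$ underlying Theorem~\ref{T:relexiveandtransitiverelationwithbooleanmatrix}), the diagonal of $\Pi(\mathbf{C})$ is identically $1$, so the $k=i$ factor equals $\chi_{_X}(x_{i})-1+1=\chi_{_X}(x_{i})\leq 1$. This bounds the minimum over $k$ from above by $1$, ruling out the anomalous value $2$ that $\odot$ can in principle produce.

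Having established that $\Pi(\mathbf{C})\odot\chi_{_X}$ is boolean, it remains to match its entries with $\chi_{_{IL(X)}}$. The minimum equals $1$ iff every factor is at least $1$, i.e.\ $\chi_{_X}(x_{k})\geq t_{ik}$ for all $k$, equivalently $t_{ik}=1$ implies $x_{k}\in X$. Invoking Theorem~\ref{T:relexiveandtransitiverelationwithbooleanmatrix}, $t_{ik}=1$ is equivalent to $x_{k}\in N_{\mathbf{C}}(x_{i})$, so the condition reads precisely $N_{\mathbf{C}}(x_{i})\subseteq X$, that is $x_{i}\in IL(X)$. The argument is not delicate; the only point requiring care, and the one place where this proof deviates meaningfully from the $IH$ version, is the initial check that $\odot$ never outputs a $2$, which is exactly where the reflexivity of $\Pi(\mathbf{C})$ is used. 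I would also note in passing that this reasoning goes through even when $X=\emptyset$ (the $k=i$ factor then forces the min to $0$, matching $IL(\emptyset)=\emptyset$), so the hypothesis $X\neq\emptyset$ in the statement is in fact unnecessary.
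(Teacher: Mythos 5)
Your proof is correct and takes essentially the same route as the paper's: unpack $\odot$ entrywise and translate the condition ``$t_{ik}=1\Rightarrow x_{k}\in X$'' into $N_{\mathbf{C}}(x_{i})\subseteq X$ via Theorem~\ref{T:relexiveandtransitiverelationwithbooleanmatrix}. Your explicit check that $t_{ii}=1$ rules out the value $2$ is a point the paper's chain of equivalences silently glosses over (note that Proposition~\ref{P:maindiagonalequaltoone} concerns the type-1 matrix, so the reflexivity of $R(\mathbf{C})$ is the right justification, as you give), and your observation that the hypothesis $X\neq\emptyset$ is superfluous is also correct.
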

\begin{proof}
Denote $M_{\mathbf{C}}\odot M_{\mathbf{C}}^{T}=(t_{ij})_{n\times n}$.\\
$x_{i}\in IL(X)\Leftrightarrow\chi_{_{IL(X)}}(x_{i})=1$\\
$\Leftrightarrow \vee_{k=1}^{m}(\chi_{_X}(x_{k}) - t_{ik} + 1)=1$\\
$\Leftrightarrow$ if $t_{ik}=1$, then $\chi_{_X}(x_{k})=1 $\\
$\Leftrightarrow$ if $x_{k}\in N_{\mathbf{C}}(x_{i})$, then $x_{k}\in X$\\
$\Leftrightarrow N_{\mathbf{C}}(x_{i})\subseteq X$.
\end{proof}

\begin{example}
As shown in Example~\ref{E:upperapproximation}, the following table is obtained:
\begin{center}
\begin{tabular}{|c|c|c|c|}
  \hline
  $X$ & $\chi_{_X}$ & $\Pi(\mathbf{C})\odot \chi_{_X}$ & $IL(X)$\\
  \hline
  $\{a\}$             & $[1~0~0~0~0~0]^{T}$ & $[1~0~0~0~0~0]^{T}$ & $\{a\}$ \\
  $\{b, c\}$          & $[0~1~1~0~0~0]^{T}$ & $[0~0~0~0~0~0]^{T}$ & $\emptyset$ \\
  $\{a, d, e\}$       & $[1~0~0~1~1~0]^{T}$ & $[1~0~0~1~0~0]^{T}$ & $\{a, d\}$ \\
  $\{b, d, e, f\}$    & $[0~1~0~1~1~1]^{T}$ & $[0~0~0~1~1~1]^{T}$ & $\{d, e, f\}$ \\
  $\{a, b, c, d, e\}$ & $[1~1~1~1~1~0]^{T}$ & $[1~1~1~1~0~0]^{T}$ & $\{a, b, c, d\}$\\
  \hline
\end{tabular}
\end{center}
\end{example}

Through the new defined operation and the type-1 characteristic matrix of a covering, the second covering upper and lower approximation operators are concisely equivalently characterized.

\begin{theorem}
\label{T:secondupperapproximationrepreentedusingbooleanmatrix}
Let $\mathbf{C}$ be a covering of $U$. Then for all $X\subseteq U$,
\begin{center}
$\chi_{_{SH(X)}}=\Gamma(\mathbf{C})\cdot \chi_{_X}$.
\end{center}
\end{theorem}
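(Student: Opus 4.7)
The plan is to chase the definitions on both sides and show they agree coordinate by coordinate. The right-hand side $\Gamma(\mathbf{C})\cdot \chi_{_X}$ is a boolean matrix–vector product, so its $i$-th entry is $\vee_{k=1}^{n}(t_{ik}\wedge \chi_{_X}(x_k))$, where $t_{ij}$ denotes the $(i,j)$-entry of $\Gamma(\mathbf{C})=M_{\mathbf{C}}\cdot M_{\mathbf{C}}^{T}$. The left-hand side $\chi_{_{SH(X)}}(x_i)$ is $1$ exactly when $x_i$ lies in some covering block that meets $X$. So I want to show these two conditions on $i$ are equivalent.

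First, I would unfold the right-hand side. Fix $i$ and observe that $(\Gamma(\mathbf{C})\cdot\chi_{_X})_i = 1$ iff there exists $k$ with $t_{ik}=1$ and $x_k\in X$. By Proposition~\ref{Prelationandset} (or equivalently Theorem~\ref{T:Discernibleneiborhoodandtype1}), $t_{ik}=1$ is exactly the condition that some $K\in\mathbf{C}$ contains both $x_i$ and $x_k$. Thus $(\Gamma(\mathbf{C})\cdot\chi_{_X})_i = 1$ iff there exist $K\in\mathbf{C}$ and $x_k\in X$ with $x_i, x_k\in K$, which is precisely to say that $K\cap X\neq\emptyset$ and $x_i\in K$, i.e., $x_i\in SH(X)$.

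For the trivial case $X=\emptyset$, the vector $\chi_{_X}$ is the zero vector, so $\Gamma(\mathbf{C})\cdot\chi_{_X}$ is the zero vector as well, matching $SH(\emptyset)=\emptyset$. Putting the two directions together gives $\chi_{_{SH(X)}}(x_i)=(\Gamma(\mathbf{C})\cdot\chi_{_X})_i$ for every $i$, hence the claimed vector equality.

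There is no real obstacle here; the only point that requires a little care is remembering to invoke the already-established description of $\Gamma(\mathbf{C})$ as the relational matrix of the relation ``$x_i$ and $x_j$ share a covering block.'' Once that is in hand, the equivalence of the two conditions on $i$ is just the definition of $SH(X)$ unwound. The proof should therefore be only a few lines of chained bi-implications, parallel in style to the proof given for the fifth upper approximation operator.
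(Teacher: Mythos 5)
Your proof is correct and follows essentially the same route as the paper's: both hinge on the fact that the $(i,k)$ entry of $\Gamma(\mathbf{C})$ is $1$ exactly when $x_i$ and $x_k$ share a covering block, i.e.\ $x_k\in I_{\mathbf{C}}(x_i)$, and then unwind the definition of $SH$. If anything, you spell out the boolean matrix--vector computation explicitly, whereas the paper only proves the set identity $SH(X)=\{x\in U\mid I(x)\cap X\neq\emptyset\}$ and leaves that computation implicit.
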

\begin{proof}
We need to prove $\{x\in U|I(x)\bigcap X\neq\emptyset\}=SH(X)$ for all $X\subseteq U$, where $I(x)=\bigcup\{K\in\mathbf{C}|x\in K\}$.
For all $x\in SH(X)$, then there exists $K\in\mathbf{C}$ such that $K\bigcap X\neq\emptyset$.
Then $K\bigcap X\subseteq I(x)\bigcap X\neq\emptyset$, which implies $x\in \{x\in U|I(x)\bigcap X\neq\emptyset\}$.
Conversely, for all $x\notin\{x\in U|I(x)\bigcap X\neq\emptyset\}$, i.e., $I(x)\bigcap X=\emptyset$ which implies $I(x)\subseteq X^{c}$.
Since $K\bigcap X\subseteq I(x)\bigcap X$ for all $x\in K$, $K\bigcap X=\emptyset$.
Hence $x\notin SH(X)$.
This completes the proof.
\end{proof}

\begin{example}
As shown in Example~\ref{E:upperapproximation}, the following table is presented:
\begin{center}
\begin{tabular}{|c|c|c|c|}
  \hline
  $X$ & $\chi_{_X}$ & $\Gamma(\mathbf{C})\cdot \chi_{_X}$ & $SH(X)$\\
  \hline
  $\{a\}$             & $[1~0~0~0~0~0]^{T}$ & $[1~1~1~1~0~0]^{T}$ & $\{a, b, c, d\}$ \\
  $\{b, c\}$          & $[0~1~1~0~0~0]^{T}$ & $[1~1~1~1~0~0]^{T}$ & $\{a, b, c, d\}$ \\
  $\{a, d, e\}$       & $[1~0~0~1~1~0]^{T}$ & $[1~1~1~1~1~1]^{T}$ & $\{a, b, c, d, e, f\}$ \\
  $\{a, b, c, d\}$    & $[1~1~1~1~0~0]^{T}$ & $[1~1~1~1~1~1]^{T}$ & $\{a, b, c, d, e, f\}$ \\
  $\{a, b, d, e, f\}$ & $[1~1~0~1~1~1]^{T}$ & $[1~1~1~1~1~1]^{T}$ & $\{a, b, c, d, e, f\}$\\
  \hline
\end{tabular}
\end{center}
\end{example}


\begin{theorem}
\label{T:theseondlowerapproximationusingbooleanmatrix}
Let $\mathbf{C}$ be a covering of $U$. Then for all $X\subseteq U$,
\begin{center}
$\chi_{_{SL(X)}}=\Gamma(\mathbf{C})\odot\chi_{_X}$.
\end{center}
\end{theorem}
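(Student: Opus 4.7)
The approach parallels the proof of Theorem~\ref{T:theseondlowerapproximationusingbooleanmatrix}'s upper-approximation counterpart (Theorem~\ref{T:secondupperapproximationrepreentedusingbooleanmatrix}) and mimics the argument used for $IL$ via $\odot$. The plan is to reduce everything to a pointwise characterization of $SL(X)$ via indiscernible neighborhoods, then unpack the definition of the new operation $\odot$ applied to $\Gamma(\mathbf{C})$ and $\chi_{_X}$.

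First, I would establish the set-theoretic identity $SL(X) = \{x \in U \mid I(x) \subseteq X\}$. This is essentially a corollary of the proof of Theorem~\ref{T:secondupperapproximationrepreentedusingbooleanmatrix}: there it was shown that $SH(Y) = \{x \in U \mid I(x) \cap Y \neq \emptyset\}$ for every $Y \subseteq U$. Applying this to $Y = X^c$ and using $SL(X) = [SH(X^c)]^c$ gives $x \in SL(X) \iff I(x) \cap X^c = \emptyset \iff I(x) \subseteq X$.

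Next, I would write $\Gamma(\mathbf{C}) = (t_{ij})_{n \times n}$ and expand the $i$-th entry of $\Gamma(\mathbf{C}) \odot \chi_{_X}$ as $\wedge_{k=1}^{n}(\chi_{_X}(x_k) - t_{ik} + 1)$. Because $t_{ii} = 1$ by Proposition~\ref{P:maindiagonalequaltoone}, the $k=i$ summand equals $\chi_{_X}(x_i) \in \{0,1\}$, so the minimum lies in $\{0,1\}$ (in particular, the expression never evaluates to $2$, confirming the output is boolean). Hence this entry equals $1$ if and only if every summand is $\geq 1$, i.e., if and only if $t_{ik} = 1$ implies $\chi_{_X}(x_k) = 1$ for all $k \in \{1,\dots,n\}$.

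Finally, by Proposition~\ref{Prelationandset} (or equivalently Theorem~\ref{T:Discernibleneiborhoodandtype1}), $t_{ik} = 1 \iff x_k \in I(x_i)$. Therefore the condition becomes: for all $k$, $x_k \in I(x_i) \Rightarrow x_k \in X$, which is precisely $I(x_i) \subseteq X$, equivalently $x_i \in SL(X)$ by step one. I do not anticipate a genuine obstacle; the only delicate point is double-checking that the $\odot$-entry is boolean (handled by the diagonal being $1$) and making sure the conversion between $t_{ik}$ and membership in $I(x_i)$ is invoked correctly — these are both immediate from the results already established.
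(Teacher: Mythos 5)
Your proposal is correct and follows essentially the same route as the paper: the paper's proof simply reduces the claim to the set identity $SL(X)=\{x\in U \mid I(x)\subseteq X\}$ and declares the rest straightforward, while you supply the duality argument for that identity and the entrywise expansion of $\odot$ (including the check that the diagonal entries force the result to be boolean) explicitly. No gaps.
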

\begin{proof}
We need to prove $\{x\in U|I(x)\subseteq X\}=SL(X)$ for all $X\subseteq U$.
In fact, it is straightforward.
\end{proof}

\begin{example}
As shown in Example~\ref{E:upperapproximation}, the following table is exhibited:
\begin{center}
\begin{tabular}{|c|c|c|c|}
  \hline
  $X$ & $\chi_{_X}$ & $\Gamma(\mathbf{C})\odot\chi_{_X}$ & $SL(X)$\\
  \hline
  $\{a\}$             & $[1~0~0~0~0~0]^{T}$ & $[0~0~0~0~0~0]^{T}$ & $\emptyset$ \\
  $\{a, b\}$          & $[1~1~0~0~0~0]^{T}$ & $[0~0~0~0~0~0]^{T}$ & $\emptyset$ \\
  $\{d, e, f\}$       & $[0~0~0~1~1~1]^{T}$ & $[0~0~0~0~1~1]^{T}$ & $\{e, f\}$ \\
  $\{a, b, c, d\}$    & $[1~1~1~1~0~0]^{T}$ & $[1~1~1~0~0~0]^{T}$ & $\{a, b, c\}$ \\
  $\{a, b, d, e, f\}$ & $[1~1~0~1~1~1]^{T}$ & $[0~0~0~0~1~1]^{T}$ & $\{e, f\}$\\
  \hline
\end{tabular}
\end{center}
\end{example}

\begin{theorem}
\label{T:axiomatizationofthesixthupperapproximation}
Let $\mathbf{C}$ be a covering of $U$. Then for all $X\subseteq U$,
\begin{center}
$\chi_{_{XH(X)}}=\Pi(\mathbf{C})^{T}\cdot\Pi(\mathbf{C})\cdot \chi_{_X}$, \\
~~$\chi_{_{XL(X)}}=\Pi(\mathbf{C})^{T}\cdot\Pi(\mathbf{C})\odot \chi_{_X}$.
\end{center}
\end{theorem}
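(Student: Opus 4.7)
The plan is to reduce both identities to the previously established matrix representations of $IH$ and $IL$, and then to interpret left multiplication by $\Pi(\mathbf{C})^T$ as forming the union of those neighborhoods $N(x_j)$ whose indices are selected by the intermediate vector.

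First I would unpack the transpose. By Theorem~\ref{T:relexiveandtransitiverelationwithbooleanmatrix}, $\Pi(\mathbf{C})_{jk}=1$ iff $x_k\in N(x_j)$, so $(\Pi(\mathbf{C})^T)_{ij}=1$ iff $x_i\in N(x_j)$. This is the crucial identification: for any boolean vector $w=(w_1,\dots,w_n)^T$, the product $\Pi(\mathbf{C})^T\cdot w$ has $i$-th entry $\bigvee_{j=1}^{n}\bigl[(\Pi(\mathbf{C})^T)_{ij}\wedge w_j\bigr]$, which equals $1$ precisely when there exists $j$ with $x_i\in N(x_j)$ and $w_j=1$. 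In other words, $\Pi(\mathbf{C})^T\cdot w$ is the characteristic vector of $\bigcup\{N(x_j):w_j=1\}$.

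Next I would apply this observation to the two claims separately. For the upper case, the earlier theorem on $IH$ gives $\Pi(\mathbf{C})\cdot \chi_{_X}=\chi_{_{IH(X)}}$, so the $j$-th entry of the inner product is $1$ exactly when $N(x_j)\cap X\neq\emptyset$. The observation above then yields
\begin{center}
$\Pi(\mathbf{C})^T\cdot\Pi(\mathbf{C})\cdot\chi_{_X}=\chi_{\bigcup\{N(x_j)\,:\,N(x_j)\cap X\neq\emptyset\}}=\chi_{_{XH(X)}}$.
\end{center}
For the lower case, the earlier theorem on $IL$ gives $\Pi(\mathbf{C})\odot \chi_{_X}=\chi_{_{IL(X)}}$, whose $j$-th entry is $1$ exactly when $N(x_j)\subseteq X$. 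The same observation then gives
\begin{center}
$\Pi(\mathbf{C})^T\cdot\Pi(\mathbf{C})\odot\chi_{_X}=\chi_{\bigcup\{N(x_j)\,:\,N(x_j)\subseteq X\}}=\chi_{_{XL(X)}}$.
\end{center}

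The only subtlety I expect is bookkeeping with the two indices: one must carefully distinguish the index $i$ (the candidate point whose membership in $XH(X)$ or $XL(X)$ is being tested) from the index $j$ (the point whose neighborhood $N(x_j)$ is scrutinised against $X$), because $XH$ and $XL$ are unions \emph{indexed by} $j$, not conditions placed directly on $x_i$'s own neighborhood. Once the transpose is read in the form $(\Pi(\mathbf{C})^T)_{ij}=1\Leftrightarrow x_i\in N(x_j)$, this separation is automatic and the two identities drop out from the $IH$ and $IL$ representations without further computation.
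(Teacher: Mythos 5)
Your argument is correct, and it takes a genuinely different route from the paper's. The paper forms the derived covering $Cov(\mathbf{C})=\{N(x)\mid x\in U\}$, notes that $\Pi(\mathbf{C})^{T}$ is a matrix representing it (so that $\Gamma(Cov(\mathbf{C}))=\Pi(\mathbf{C})^{T}\cdot\Pi(\mathbf{C})$), and then appeals to Theorems~\ref{T:secondupperapproximationrepreentedusingbooleanmatrix} and~\ref{T:theseondlowerapproximationusingbooleanmatrix} via the identities $XH_{\mathbf{C}}=SH_{Cov(\mathbf{C})}$ and $XL_{\mathbf{C}}=SL_{Cov(\mathbf{C})}$. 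You instead factor through $IH$ and $IL$ of the original covering and read left multiplication by $\Pi(\mathbf{C})^{T}$ as ``union of the neighborhoods $N(x_{j})$ selected by the intermediate vector''; both of your steps are sound. Your route buys something real in the lower case: the paper's identity $XL_{\mathbf{C}}(X)=SL_{Cov(\mathbf{C})}(X)$ is false in general, since $SL$ is the dual of $SH$, so $SL_{Cov(\mathbf{C})}(X)=\{x\mid\bigcup\{N(y)\mid x\in N(y)\}\subseteq X\}$ is a universal condition over $y$, while $XL_{\mathbf{C}}(X)=\bigcup\{N(y)\mid N(y)\subseteq X\}$ is existential. For $U=\{a,b\}$, $\mathbf{C}=\{\{a\},\{a,b\}\}$ and $X=\{a\}$ one has $XL_{\mathbf{C}}(X)=\{a\}$ but $SL_{Cov(\mathbf{C})}(X)=\emptyset$. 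Correspondingly the mixed expression $\Pi(\mathbf{C})^{T}\cdot\Pi(\mathbf{C})\odot\chi_{_X}$ is not associative: the paper's implicit bracketing $(\Pi(\mathbf{C})^{T}\cdot\Pi(\mathbf{C}))\odot\chi_{_X}$ computes $\chi_{_{SL_{Cov(\mathbf{C})}(X)}}$, which in this example is $[0~0]^{T}$, whereas your bracketing $\Pi(\mathbf{C})^{T}\cdot(\Pi(\mathbf{C})\odot\chi_{_X})$ gives $[1~0]^{T}=\chi_{_{XL(X)}}$ for the $XL$ of Section~\ref{S:BasicDefinitions}. So for the lower formula yours is the argument that actually establishes the statement, under the only bracketing for which it holds; for the upper formula the two bracketings agree by associativity of the boolean product, and there your proof and the paper's are equivalent in substance.
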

\begin{proof}
For a covering $\mathbf{C}$ of $U$, we construct a special covering $Cov(\mathbf{C}) = \{N(x)|x\in U\}$ induced by $\mathbf{C}$.
Then $XH_{\mathbf{C}}(X) = SH_{Cov(\mathbf{C})}(X)$ and $XL_{\mathbf{C}}(X) = SL_{Cov(\mathbf{C})}(X)$ for all $X\subseteq U$.
Since $\Pi(\mathbf{C})^{T}$ is a matrix representing $Cov(\mathbf{C})$, according to Theorems~\ref{T:secondupperapproximationrepreentedusingbooleanmatrix} and~\ref{T:theseondlowerapproximationusingbooleanmatrix}, $\chi_{_{XH(X)}}=\Pi(\mathbf{C})^{T}\cdot\Pi(\mathbf{C})\cdot \chi_{_X}$ and $\chi_{_{XL(X)}}=\Pi(\mathbf{C})^{T}\cdot\Pi(\mathbf{C})\odot \chi_{_X}$.
\end{proof}

\begin{example}
As shown in Example~\ref{E:upperapproximation}, the following two tables are revealed:
\begin{center}
\begin{tabular}{|c|c|c|c|}
  \hline
  $X$                 &$\chi_{_X}$           & $\Pi(\mathbf{C})^{T}\cdot\Pi(\mathbf{C})\cdot\chi_{_X}$   & $XH(X)$\\
  \hline
  $\{a\}$             & $[1~0~0~0~0~0]^{T}$ & $[1~1~1~1~0~0]^{T}$                 & $\{a, b, c, d\}$          \\
  $\{a, b\}$          & $[1~1~0~0~0~0]^{T}$ & $[1~1~1~1~0~0]^{T}$                 & $\{a, b, c, d\}$          \\
  $\{a, b, c\}$       & $[1~1~1~0~0~0]^{T}$ & $[1~1~1~1~0~0]^{T}$                 & $\{a, b, c, d\}$ \\
  $\{d, e, f\}$       & $[0~0~0~1~1~1]^{T}$ & $[1~0~1~1~1~1]^{T}$                 & $\{a, c, d, e, f\}$       \\
  $\{a, d, e, f\}$    & $[1~0~0~1~1~1]^{T}$ & $[1~1~1~1~1~1]^{T}$                 & $\{a, b, c, d, e, f\}$ \\
  \hline
\end{tabular}
\end{center}
\begin{center}
\begin{tabular}{|c|c|c|c|}
  \hline
  $X$                 &$\chi_{_X}$           & $\Pi(\mathbf{C})^{T}\cdot\Pi(\mathbf{C})\odot \chi_{_X}$   & $XL(X)$\\
  \hline
  $\{a\}$             & $[1~0~0~0~0~0]^{T}$ & $[0~0~0~0~0~0]^{T}$                 & $\emptyset$          \\
  $\{a, b\}$          & $[1~1~0~0~0~0]^{T}$ & $[0~1~0~0~0~0]^{T}$                 & $\{b\}$          \\
  $\{a, b, c\}$       & $[1~1~1~0~0~0]^{T}$ & $[0~1~0~0~0~0]^{T}$                 & $\{b\}$ \\
  $\{a, b, c, d\}$    & $[1~1~1~1~0~0]^{T}$ & $[1~1~1~0~0~0]^{T}$                 & $\{a, b, c\}$       \\
  $\{a, b, d, e, f\}$ & $[1~1~0~1~1~1]^{T}$ & $[0~1~0~0~1~1]^{T}$                 & $\{b, e, f\}$ \\
  $\{a, b, c, d, e, f\}$ & $[1~1~1~1~1~1]^{T}$ & $[1~1~1~1~1~1]^{T}$              & $\{a, b, c, d, e, f\}$ \\
  \hline
\end{tabular}
\end{center}
\end{example}

Matrix representations of covering approximation operators provide efficient computational models for covering-based rough sets.

\section{Boolean matrix decomposition through covering-based rough sets}
\label{S:booleanmatrixdecompositionusingcoveringbasedroughsets}

In this section, we present a sufficient and necessary condition for a boolean matrix to decompose into the boolean product of another boolean matrix and its transpose, i.e., $B = A\cdot A^{T}$, where $B\in \{0, 1\}^{n\times n}$ and $A\in\{0, 1\}^{n\times m}$.
Here $\{0, 1\}^{n\times m}$ denotes the family of all boolean matrices $C = (C_{ij})_{n\times m}$.
Furthermore $A\cdot A^{T}$ is called an optimal decomposition of $B$ if $A\cdot A^{T} = B$ and $A$ has minimal columns.
Based on the condition, we also consider an optimal boolean matrix decomposition.

\begin{proposition}
Let $\mathbf{C}$ be a covering of $U$ and $M_{\mathbf{C}}$ a matrix representing $\mathbf{C}$.
Then $M_{\mathbf{C}}\cdot M_{\mathbf{C}}^{T}$ is symmetric and $(M_{\mathbf{C}}\cdot M_{\mathbf{C}}^{T})_{ii} = 1$.
\end{proposition}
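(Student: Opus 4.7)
The plan is to observe that this proposition is essentially a direct restatement of two results already established earlier in the excerpt, so the proof reduces to invoking them.

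First, I would handle the symmetry claim. This was already proved in the proposition asserting that for any family of subsets $\mathbf{F}$ of $U$, there exists a symmetric relation $R$ such that $M_{\mathbf{F}}\cdot M_{\mathbf{F}}^{T}$ is the relational matrix of $R$; the proof there showed symmetry by the one-line calculation $(M_{\mathbf{F}}\cdot M_{\mathbf{F}}^{T})^{T} = (M_{\mathbf{F}}^{T})^{T}\cdot M_{\mathbf{F}}^{T} = M_{\mathbf{F}}\cdot M_{\mathbf{F}}^{T}$. Since a covering $\mathbf{C}$ is in particular a family of subsets of $U$, this immediately applies to $M_{\mathbf{C}}$.

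Second, for the diagonal condition $(M_{\mathbf{C}}\cdot M_{\mathbf{C}}^{T})_{ii}=1$, I would cite Proposition~\ref{P:maindiagonalequaltoone}, which gives the stronger biconditional: for a family $\mathbf{F}$ with $\emptyset\notin\mathbf{F}$, being a covering of $U$ is equivalent to having all diagonal entries of $M_{\mathbf{F}}\cdot M_{\mathbf{F}}^{T}$ equal to one. Since $\mathbf{C}$ is a covering (hence satisfies $\emptyset\notin\mathbf{C}$ and $\bigcup\mathbf{C}=U$ by definition), the forward direction of that proposition yields $(M_{\mathbf{C}}\cdot M_{\mathbf{C}}^{T})_{ii}=1$ for every $i$.

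There is no real obstacle here: the statement is purely a corollary consolidating two prior results as a preparatory lemma for the upcoming boolean matrix decomposition theorem, where these two structural properties (symmetry, unit diagonal) will serve as necessary conditions on the matrix $B$. So the proof is essentially one sentence, citing both results and noting that a covering satisfies the hypotheses of each.
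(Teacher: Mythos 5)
Your proposal is correct and matches the paper's treatment: the paper states this proposition without proof, evidently regarding it as an immediate consequence of the earlier symmetry computation $(M_{\mathbf{F}}\cdot M_{\mathbf{F}}^{T})^{T}=M_{\mathbf{F}}\cdot M_{\mathbf{F}}^{T}$ and of the forward direction of Proposition~\ref{P:maindiagonalequaltoone}, which is exactly the pair of results you invoke. Nothing is missing.
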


For all $A, B, C\in\{0, 1\}^{n\times m}$, if $A_{ij} = B_{ij} \vee C_{ij}$ for $i\in\{1, \cdots, n\}$ and $j\in\{1, \cdots, m\}$, then we denote $A = B\vee C$ and we say that $A$ is the union of $B$ and $C$.
If $A_{ij}\leq B_{ij}$ for $i\in\{1, \cdots, n\}$ and $j\in\{1, \cdots, m\}$, then we denote $A \leq B$.
Obviously, $A = B$ if and only if $A \leq B$ and $B \leq A$.

In the following proposition, we break the characteristic matrix of a covering into the union of some ``small" characteristic matrices.

\begin{proposition}
\label{P:subformulaanddecomposition}
Let $\mathbf{C}$ be a covering of $U = \{x_{1}, \cdots, x_{n}\}$. Then
\begin{center}
$M_{\mathbf{C}}\cdot M_{\mathbf{C}}^{T} = \vee_{K\in\mathbf{C}}(M_{\{K\}}\cdot M_{\{K\}}^{T})$.
\end{center}
\end{proposition}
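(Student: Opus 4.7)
The plan is to prove this by a direct entrywise computation, leveraging the column decomposition of $M_{\mathbf{C}}$ that was already exploited in the proof of the very first proposition in the excerpt (which showed uniqueness of $M_{\mathbf{C}} \cdot M_{\mathbf{C}}^T$). The key observation is that a boolean product $M \cdot M^T$ splits as an OR over the rank-one contributions of each column of $M$, and each column of $M_{\mathbf{C}}$ is exactly $M_{\{K\}}$ for the corresponding $K \in \mathbf{C}$.

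First, I would write $M_{\mathbf{C}} = (a_1, \ldots, a_m)$ where $a_j$ is the $n$-dimensional column vector that is the characteristic vector of $K_j \in \mathbf{C}$; by definition $a_j = M_{\{K_j\}}$. Second, I would expand the boolean matrix product as
\begin{equation*}
M_{\mathbf{C}} \cdot M_{\mathbf{C}}^T = (a_1, \ldots, a_m) \cdot \begin{pmatrix} a_1^T \\ \vdots \\ a_m^T \end{pmatrix} = \bigvee_{j=1}^{m} a_j \cdot a_j^T,
\end{equation*}
which is precisely the identity used in the uniqueness proposition. Substituting $a_j = M_{\{K_j\}}$ then yields the claimed equality $M_{\mathbf{C}} \cdot M_{\mathbf{C}}^T = \bigvee_{K \in \mathbf{C}} (M_{\{K\}} \cdot M_{\{K\}}^T)$.

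As a sanity check, I would verify this entrywise: both sides take value $1$ at position $(i,k)$ if and only if there exists $K \in \mathbf{C}$ with $x_i, x_k \in K$. For the LHS this follows from Proposition~\ref{Prelationandset}; for the RHS it is immediate since $(M_{\{K\}} \cdot M_{\{K\}}^T)_{ik} = \chi_K(x_i) \wedge \chi_K(x_k)$.

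Honestly, there is no substantive obstacle here; the result is a direct consequence of how boolean matrix multiplication distributes over the column partition, and the only care needed is notational, namely recognizing that the single-column matrix $M_{\{K_j\}}$ coincides with the $j$-th column of $M_{\mathbf{C}}$. The main value of the statement is conceptual, preparing the reader to view $\Gamma(\mathbf{C})$ as built up block-by-block from individual covering elements, which will be useful for the decomposition and axiomatization results in the following section.
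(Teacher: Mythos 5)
Your proof is correct. The only point needing care is the one you flag yourself: for a singleton family $\{K\}$ the representing matrix $M_{\{K\}}$ is the unique $n\times 1$ characteristic column of $K$, so it really is the corresponding column of $M_{\mathbf{C}}$, and the boolean identity $M\cdot M^{T}=\vee_{j}a_{j}\cdot a_{j}^{T}$ (already displayed in the paper's proof of the uniqueness proposition) immediately gives the claim. The paper instead argues by antisymmetry of $\leq$: it declares $\vee_{K}(M_{\{K\}}\cdot M_{\{K\}}^{T})\leq M_{\mathbf{C}}\cdot M_{\mathbf{C}}^{T}$ straightforward, and for the reverse inequality invokes Proposition~\ref{Prelationandset} to get, from $(M_{\mathbf{C}}\cdot M_{\mathbf{C}}^{T})_{ij}=1$, a block $K$ containing both $x_{i}$ and $x_{j}$, hence $(M_{\{K\}}\cdot M_{\{K\}}^{T})_{ij}=1$. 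Your entrywise sanity check reproduces exactly that argument, so the two proofs meet in the middle; the difference is that your primary route is purely algebraic and needs no appeal to the induced relation, while the paper's route reuses the set-theoretic characterization it has already built. Both are sound; yours is marginally more self-contained, the paper's keeps the covering-block interpretation in view, which is the interpretation the subsequent decomposition theorem actually exploits.
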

\begin{proof}
It is straightforward that $\vee_{K\in\mathbf{C}}(M_{\{K\}}\cdot M_{\{K\}}^{T}) \leq M_{\mathbf{C}}\cdot M_{\mathbf{C}}^{T}$.
If $(M_{\mathbf{C}}\cdot M_{\mathbf{C}}^{T})_{ij} = 1$, then according to Proposition~\ref{Prelationandset}, $x_{i}\in I_{\mathbf{C}}(x_{j})$.
Hence there exists $K\in\mathbf{C}$ such that $x_{i}, x_{j}\in K$, which implies $(M_{\{K\}}\cdot M_{\{K\}}^{T})_{ij} = 1$.
This proves that $M_{\mathbf{C}}\cdot M_{\mathbf{C}}^{T} \leq \vee_{K\in\mathbf{C}}(M_{\{K\}}\cdot M_{\{K\}}^{T})$.
To sum up, we prove $M_{\mathbf{C}}\cdot M_{\mathbf{C}}^{T} = \vee_{K\in\mathbf{C}}(M_{\{K\}}\cdot M_{\{K\}}^{T})$.
\end{proof}

In fact, the characteristic matrix of a covering can be represented by the characteristic functions of covering blocks.

\begin{corollary}
Let $\mathbf{C}$ be a covering of $U$. Then
\begin{center}
$M_{\mathbf{C}}\cdot M_{\mathbf{C}}^{T} = \vee_{K\in\mathbf{C}}(\chi_{_{K}}\cdot \chi_{_{K}}^{T})$.
\end{center}
\end{corollary}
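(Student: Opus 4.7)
The plan is to reduce this corollary directly to the preceding Proposition~\ref{P:subformulaanddecomposition}. The content of that proposition is already the identity $M_{\mathbf{C}}\cdot M_{\mathbf{C}}^{T} = \vee_{K\in\mathbf{C}}(M_{\{K\}}\cdot M_{\{K\}}^{T})$, so all that remains is to identify $M_{\{K\}}$ with $\chi_{_{K}}$.

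First I would unpack the definition of the matrix representation applied to the singleton family $\{K\}$: since $\{K\}$ contains only one subset of $U$, the matrix $M_{\{K\}}$ has exactly one column, namely the $n$-dimensional $0/1$ vector whose $i$-th entry is $1$ if and only if $x_{i}\in K$. By the definition of the characteristic function, this column is precisely $\chi_{_{K}}$, so $M_{\{K\}} = \chi_{_{K}}$ (viewed as an $n\times 1$ matrix). Consequently $M_{\{K\}}\cdot M_{\{K\}}^{T} = \chi_{_{K}}\cdot \chi_{_{K}}^{T}$ for every $K\in\mathbf{C}$.

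Taking the union (componentwise OR) over all $K\in\mathbf{C}$ on both sides and substituting into the conclusion of Proposition~\ref{P:subformulaanddecomposition} yields the desired equality $M_{\mathbf{C}}\cdot M_{\mathbf{C}}^{T} = \vee_{K\in\mathbf{C}}(\chi_{_{K}}\cdot \chi_{_{K}}^{T})$. There is no real obstacle here; the only subtlety worth flagging explicitly is the dimensional identification $M_{\{K\}}=\chi_{_{K}}$, which is a direct consequence of the definition of $M_{\mathbf{F}}$ specialized to $|\mathbf{F}|=1$. The proof is therefore essentially a one-line appeal to the previous proposition together with this definitional remark.
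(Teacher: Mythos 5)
Your proposal is correct and matches the route the paper intends: the corollary is stated without proof precisely because it follows from Proposition~\ref{P:subformulaanddecomposition} once one observes that $M_{\{K\}}$ is the single column $\chi_{_{K}}$, so $M_{\{K\}}\cdot M_{\{K\}}^{T}=\chi_{_{K}}\cdot \chi_{_{K}}^{T}$. Your explicit remark on this definitional identification is the only content needed, and it is accurate.
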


Inspired by the ``small" characteristic matrix of a covering block, we define the sub-formula, which serves as a foundation for designing an algorithm for an optimal boolean matrix decomposition.

\begin{definition}
Let $U = \{x_{1}, \cdots, x_{n}\}$ be a universe and $A\in\{0, 1\}^{n\times n}$.
$A$ is called a sub-formula on $U$ if there exists $X \subseteq U$ such that $A = M_{\{X\}}\cdot M_{\{X\}}^{T}$.
\end{definition}

\begin{example}
Let $A_{1}=\left[\begin{array}{lcr}
  1 & 0 & 1 \\
  0 & 0 & 0\\
  1 & 0 & 1 \\
\end{array}\right]$ and $A_{2}=\left[\begin{array}{lcr}
  1 & 1 & 1 \\
  1 & 1 & 1\\
  1 & 1 & 1 \\
\end{array}\right]$.
Then $A_{1}$ and $A_{2}$ are sub-formulas on $U = \{x_{1}, x_{2}, x_{3}\}$ since there exist $X_{1} = \{x_{1}, x_{3}\}$ and $X_{2} = \{x_{1}, x_{2}, x_{3}\}$ such that $A_{1} = M_{\{X_{1}\}}\cdot M_{\{X_{1}\}}^{T}$ and $A_{2} = M_{\{X_{2}\}}\cdot M_{\{X_{2}\}}^{T}$.
\end{example}

In the following theorem, we present a sufficient and necessary condition for a square boolean matrix to decompose into the boolean product of another boolean matrix and its transpose.

\begin{theorem}
\label{T:equivalentcharacterizationforbooleandecomposition}
Let $B\in\{0, 1\}^{n\times n}$.
Then there exists $A \in\{0, 1\}^{n\times m}$ such that $B = A\cdot A^{T}$ iff $(B = B^{T})\wedge (\forall i, j\in\{1, \cdots, n\}, B_{ij} = 1 \Rightarrow B_{ii} = 1)$.
\end{theorem}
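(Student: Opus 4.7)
The plan is to prove the two directions separately. The forward direction is an unwinding of the definition of the boolean product: $B^{T} = (A \cdot A^{T})^{T} = A \cdot A^{T} = B$ yields symmetry, and if $B_{ij} = \vee_{k}(A_{ik} \wedge A_{jk}) = 1$ then some index $k_{0}$ satisfies $A_{ik_{0}} = A_{jk_{0}} = 1$, whence $B_{ii} \geq A_{ik_{0}} \wedge A_{ik_{0}} = 1$. So the two stated conditions are indeed necessary.

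For the converse, assume $B$ is symmetric and $B_{ij} = 1$ forces $B_{ii} = 1$. I would construct the required $A$ as the matrix representation $M_{\mathbf{F}}$ of the family
\[
\mathbf{F} = \{\{x_{i}, x_{j}\} : 1 \leq i \leq j \leq n,\ B_{ij} = 1\}
\]
of subsets of $U = \{x_{1}, \ldots, x_{n}\}$. This family consists of the singletons $\{x_{i}\}$ whenever $B_{ii} = 1$ together with the unordered pairs $\{x_{i}, x_{j}\}$ ($i \neq j$) whenever $B_{ij} = 1$; the diagonal-implication hypothesis guarantees consistency, since every such pair automatically satisfies $B_{ii} = B_{jj} = 1$. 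The proof of Proposition~\ref{Prelationandset} nowhere actually invokes the covering assumption, so I can reuse it verbatim to obtain $(M_{\mathbf{F}} \cdot M_{\mathbf{F}}^{T})_{jk} = 1$ iff some $F \in \mathbf{F}$ contains both $x_{j}$ and $x_{k}$.

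Verifying $B = M_{\mathbf{F}} \cdot M_{\mathbf{F}}^{T}$ then splits into two inclusions. The forward inclusion is immediate: if $B_{jk} = 1$, symmetry of $B$ places the unordered pair $\{x_{j}, x_{k}\}$ in $\mathbf{F}$. The reverse inclusion is the step I expect to demand the most care, and it is where keeping block sizes at most two becomes decisive. If $x_{j}$ and $x_{k}$ both lie in some $F \in \mathbf{F}$, then either $F = \{x_{p}\}$, forcing $j = k = p$ and $B_{jk} = B_{pp} = 1$, or $F = \{x_{p}, x_{q}\}$ with $p \neq q$ and $\{x_{j}, x_{k}\} \subseteq \{x_{p}, x_{q}\}$; the latter gives $\{x_{j}, x_{k}\} = \{x_{p}, x_{q}\}$ (hence $B_{jk} = B_{pq} = 1$) when $j \neq k$, and lets the diagonal-implication hypothesis recover $B_{jj} = 1$ from $B_{pq} = 1$ when $j = k$. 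A larger-clique construction would shorten the existence argument but would require an extra verification of clique-ness, whereas the two-element family makes the reverse inclusion essentially automatic. Questions of optimality (minimising the number of columns of $A$) are deferred to the algorithmic development that follows.
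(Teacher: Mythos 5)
Your proof is correct, and the forward direction coincides with the paper's argument verbatim. The converse follows the same underlying template as the paper --- realize $B$ as $M_{\mathbf{F}}\cdot M_{\mathbf{F}}^{T}$ for a suitable family $\mathbf{F}$ of subsets of $U$, using the fact that $(M_{\mathbf{F}}\cdot M_{\mathbf{F}}^{T})_{jk}=1$ iff some block contains both $x_{j}$ and $x_{k}$ --- but your choice of $\mathbf{F}$ differs. The paper takes $\mathbf{F}$ to consist of \emph{all} sets $K$ whose sub-formula $M_{\{K\}}\cdot M_{\{K\}}^{T}$ is dominated by $B$ and asserts, without detailed verification, that the join of all such sub-formulas equals $B$; you instead take only the singletons $\{x_{i}\}$ with $B_{ii}=1$ and the pairs $\{x_{i},x_{j}\}$ with $B_{ij}=1$, and then check both inclusions explicitly. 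Your leaner family makes the reverse inclusion genuinely easy (a block of size at most two leaves no room for spurious ones) and yields a more self-contained existence proof, at the cost of producing an $A$ with up to $O(n^{2})$ columns; the paper's all-sub-formulas construction is less economical to verify but feeds directly into its subsequent discussion of \emph{optimal} decompositions via the general intersection-reduct (Theorem~\ref{T:thoughtsofalgorithms}), where the maximal sub-formulas, i.e.\ maximal cliques, are what matter. Two small points worth adding for completeness: your appeal to the proof of Proposition~\ref{Prelationandset} is legitimate since that argument never uses $\bigcup\mathbf{F}=U$, but you should say a word about the degenerate case $B=0$, where $\mathbf{F}=\emptyset$ and one should take $A$ to be, say, a single zero column rather than an $n\times 0$ matrix; and you have correctly deferred optimality, which the theorem does not claim.
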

\begin{proof}
($\Longrightarrow$): On one hand, $B^{T} = (A\cdot A^{T})^{T} = (A^{T})^{T}\cdot A^{T} = A\cdot A^{T} = B$. On the other hand, we suppose $A = \left[\begin{array}{lcr}
  a_{1}^{T}\\
  \vdots\\
  a_{n}^{T}\\
\end{array}\right]$, where $a_{i}$ is a $m$-dimensional column vector.
$\forall i, j\in\{1, \cdots, n\}$, if $B_{ij} = 1$, then $a_{i}^{T}\cdot a_{j} = 1$, i.e., there exists $h\in\{1, \cdots, m\}$ such that $a_{ih} = a_{jh} = 1$.
Hence $B_{ii} = a_{i}^{T}\cdot a_{i} = 1$ and $B_{jj} = a_{j}^{T}\cdot a_{j} = 1$.

($\Longleftarrow$): Suppose that $A_{1}, \cdots, A_{m}$ is all the sub-formulas that satisfy $A_{i}\leq B$, i.e., $A_{i}\vee B = B$, $i\in \{1, \cdots, k\}$.
If $(B = B^{T})\wedge (\forall i, j\in\{1, \cdots, n\}, B_{ij} = 1 \Rightarrow B_{ii} = 1)$, then $B = \vee_{i}^{m}A_{i}$.
Since $A_{i}$ is a sub-formula on $U = \{x_{1}, \cdots, x_{n}\}$, there exists a unique $K_{i}$ such that $A_{i} = M_{\{K_{i}\}}\cdot M_{\{K_{i}\}}^{T}$.
Hence $B = \vee_{i}^{m}A_{i} = \vee_{i}^{m}(M_{\{K_{i}\}}\cdot M_{\{K_{i}\}}^{T})$.
According to Proposition~\ref{P:subformulaanddecomposition}, $B = M_{\mathbf{C}}\cdot M_{\mathbf{C}}^{T}$ where $\mathbf{C} = \{K_{1}, \cdots, K_{m}\}$, which implies $B = A\cdot A^{T}$ where $A = M_{\mathbf{C}}$.
\end{proof}

In applications, $min\{k|B = A\cdot A^{T}, A\in\{0, 1\}^{n\times k}\}$ has some special meaning.
For instance, in role mining, it represents the minimal number of roles~\cite{LuVaidyaAtluri08Optimal}.

Inspired by the reducible element~\cite{YaoYao12Covering,ZhuWang03Reduction}, we define the notion of general intersection-reducible element, which is similar to the relative covering reduct proposed in literature~\cite{DuHuZhuMa11Rule}.

\begin{definition}
Let $\mathbf{C}$ be a covering of $U$ and $K\in\mathbf{C}$.
If there exists $K'\in\mathbf{C}$ such that $K\subseteq K'$, then $K$ is called a general intersection-reducible element of $\mathbf{C}$; otherwise, it is called general intersection-irreducible.
\end{definition}

Note that the above definition can be extended to any family of subsets of a set.
Hence in the rest of this paper, this notion will be used to deal with the redundancy with respect to a family of subsets of a set.

In fact, the family of all general intersection-reducible elements of $\mathbf{C}$ is unique, and we call it the general intersection-reduct of $\mathbf{C}$, and denote it as $GIR(\mathbf{C})$.
The following theorem explores the relationship between an optimal boolean matrix decomposition and the general intersection-reduct.

\begin{theorem}
\label{T:thoughtsofalgorithms}
Let $B\in \{0, 1\}^{n\times n}$ where $B = B^{T}$ and $\forall i, j\in\{1, \cdots, n\}, B_{ij} = 1 \Rightarrow B_{ii} = 1$.
Then $B = M_{GIR(\mathbf{C}_{B})}\cdot M_{GIR(\mathbf{C}_{B})}^{T}$ is an optimal decomposition of $B$, where $\mathbf{C}_{B} = \{K|\exists A$ s.t. $(A\leq B)\wedge(M_{\{K\}}\cdot M_{\{K\}}^{T} = A)\} - \{\emptyset\}$.
\end{theorem}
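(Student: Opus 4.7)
The plan is to separate the statement into \emph{validity} ($M_{GIR(\mathbf{C}_B)}\cdot M_{GIR(\mathbf{C}_B)}^T=B$) and \emph{optimality} (no $A\in\{0,1\}^{n\times m}$ with $m<|GIR(\mathbf{C}_B)|$ satisfies $A\cdot A^T=B$), and handle the two halves independently. For validity, I first apply Proposition~\ref{P:subformulaanddecomposition} to the family $GIR(\mathbf{C}_B)$ to obtain $M_{GIR(\mathbf{C}_B)}\cdot M_{GIR(\mathbf{C}_B)}^T=\bigvee_{K\in GIR(\mathbf{C}_B)}M_{\{K\}}\cdot M_{\{K\}}^T$. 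The bound ``$\leq B$'' is immediate because every $K\in GIR(\mathbf{C}_B)\subseteq\mathbf{C}_B$ satisfies $M_{\{K\}}\cdot M_{\{K\}}^T\leq B$ by definition of $\mathbf{C}_B$. For ``$\geq B$'', fix $(i,j)$ with $B_{ij}=1$; the hypotheses $B=B^T$ and $B_{ij}=1\Rightarrow B_{ii}=1$ force $B_{ii}=B_{jj}=B_{ij}=B_{ji}=1$, so $K_0=\{x_i,x_j\}\in\mathbf{C}_B$; since $U$ is finite, $K_0$ is contained in some $K^*\in GIR(\mathbf{C}_B)$, and then $(M_{\{K^*\}}\cdot M_{\{K^*\}}^T)_{ij}=1$.

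For optimality, let $\mathbf{D}=\{D_1,\ldots,D_s\}$ be any family with $M_{\mathbf{D}}\cdot M_{\mathbf{D}}^T=B$; by reasoning parallel to the forward direction of Theorem~\ref{T:equivalentcharacterizationforbooleandecomposition}, each nonempty $D_k$ already lies in $\mathbf{C}_B$. The natural strategy is to prove the subclaim that every $K^*\in GIR(\mathbf{C}_B)$ must actually equal some $D_k$: if some $D_k\supseteq K^*$, then $D_k\in\mathbf{C}_B$ together with the maximality of $K^*$ forces $D_k=K^*$, giving an injection $GIR(\mathbf{C}_B)\hookrightarrow\mathbf{D}$ and hence $s\geq|GIR(\mathbf{C}_B)|$.

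The subclaim will be the main obstacle. Combinatorially, $B$ encodes a reflexive symmetric graph $G_B$ on $\{x_i:B_{ii}=1\}$, the members of $\mathbf{C}_B$ are its cliques, and any valid $\mathbf{D}$ is an edge-clique cover; the subclaim amounts to the assertion that the collection of \emph{all} maximal cliques is a minimum edge-clique cover. Because this is false for generic graphs, the proof will have to exploit additional structure I have not yet used, for instance by induction on $|U|$ after deleting a maximal $K^*$ together with the entries it uniquely contributes, or by a pigeonhole on pairs inside $K^*$ showing that some specific pair admits no cover other than a clique that contains all of $K^*$. Pinning down which of these routes actually goes through, and clarifying whether the claimed optimality implicitly requires a stronger assumption on $B$ than symmetry plus $B_{ij}=1\Rightarrow B_{ii}=1$ (so as to rule out configurations like a ``three-sun'', whose four maximal cliques admit an edge-clique cover of size three), is where I would concentrate the work.
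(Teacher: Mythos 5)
The paper states Theorem~\ref{T:thoughtsofalgorithms} without any proof, so there is nothing of the authors' to compare your argument against line by line. Your validity half is correct and complete: $M_{GIR(\mathbf{C}_B)}\cdot M_{GIR(\mathbf{C}_B)}^{T}=\vee_{K}\,M_{\{K\}}\cdot M_{\{K\}}^{T}\leq B$ because every block of $GIR(\mathbf{C}_B)$ lies in $\mathbf{C}_B$, and conversely $B_{ij}=1$ forces $B_{ji}=B_{ii}=B_{jj}=1$, so $\{x_i,x_j\}\in\mathbf{C}_B$ and is contained in some maximal block; this is exactly the route suggested by Proposition~\ref{P:subformulaanddecomposition} and Theorem~\ref{T:equivalentcharacterizationforbooleandecomposition} and is surely what the authors intended.

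The obstacle you flag in the optimality half is not a defect of your proof strategy but a genuine error in the statement, and your ``three-sun'' suspicion is an actual counterexample. Take $U=\{a,b,c,d,e,f\}$ and let $B$ be the relational matrix of the reflexive symmetric relation with edges $ab, ac, bc, ad, bd, be, ce, af, cf$. Then $\mathbf{C}_B$ is the family of all nonempty cliques, $GIR(\mathbf{C}_B)$ consists of the four maximal cliques $\{a,b,c\}$, $\{a,b,d\}$, $\{b,c,e\}$, $\{a,c,f\}$, yet the three cliques $\{a,b,d\}$, $\{b,c,e\}$, $\{a,c,f\}$ already cover every edge and every vertex, so $B=A\cdot A^{T}$ for an $A$ with only three columns. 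Hence $M_{GIR(\mathbf{C}_B)}$ does not have minimal columns. What is salvageable is weaker: every block of any decomposition is a clique and can be enlarged to a maximal clique without destroying $A\cdot A^{T}=B$, so an optimal decomposition can always be found among \emph{subfamilies} of $GIR(\mathbf{C}_B)$; but choosing a minimum such subfamily is the minimum edge clique cover problem, and no hypothesis as weak as $B=B^{T}$ together with $B_{ij}=1\Rightarrow B_{ii}=1$ can force the answer to be ``all maximal cliques''. So do not search for the missing lemma in your subclaim --- it does not exist; the theorem itself must be weakened (e.g., to assert only existence of a decomposition supported on $GIR(\mathbf{C}_B)$) before any proof can close.
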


Theorem~\ref{T:thoughtsofalgorithms} shows that finding an optimal boolean matrix decomposition is equivalent to find the general intersection-reduct of a covering.
An example is provided to illustrate this interesting transformation.

\begin{example}
Let $B = \left[\begin{matrix}
  1 & 1 & 0 & 1 & 0 \\
  1 & 1 & 0 & 1 & 0 \\
  0 & 0 & 1 & 1 & 0 \\
  1 & 1 & 1 & 1 & 0 \\
  0 & 0 & 0 & 0 & 0
\end{matrix}\right]$.
Then the maximal sub-formulas containing in $B$ are $A_{1} = \left[\begin{matrix}
  1 & 1 & 0 & 1 & 0 \\
  1 & 1 & 0 & 1 & 0 \\
  0 & 0 & 0 & 0 & 0 \\
  1 & 1 & 0 & 1 & 0 \\
  0 & 0 & 0 & 0 & 0
\end{matrix}\right]$ and $A_{2} = \left[\begin{matrix}
  0 & 0 & 0 & 0 & 0 \\
  0 & 0 & 0 & 0 & 0 \\
  0 & 0 & 1 & 1 & 0 \\
  0 & 0 & 1 & 1 & 0 \\
  0 & 0 & 0 & 0 & 0
\end{matrix}\right]$.
We suppose $U = \{x_{1}, \cdots, x_{5}\}$ and $K_{1} = \{x_{1}, x_{2}, x_{4}\}$ and $K_{2} = \{x_{3}, x_{4}\}$, then $A_{1} = M_{\{K_{1}\}}\cdot M_{\{K_{1}\}}^{T}$ and $A_{2} = M_{\{K_{2}\}}\cdot M_{\{K_{2}\}}^{T}$.
Hence $B = A_{1}\vee A_{2} = (M_{\{K_{1}\}}\cdot M_{\{K_{1}\}}^{T}) \vee (M_{\{K_{2}\}}\cdot M_{\{K_{2}\}}^{T}) = M_{\mathbf{C}}\cdot M_{\mathbf{C}}^{T}$ where $\mathbf{C} = \{K_{1}, K_{2}\}$.
\end{example}

The following algorithm shows how to obtain an optimal boolean matrix decomposition using covering-based rough sets.

\begin{algorithm}[H]
\caption{An algorithm for optimal boolean matrix decomposition}
\label{Analgorithmforminimalbooleanmatrixdecomposition}
\textbf{Input}: $B\in\{0, 1\}^{n\times n}$ \\
\textbf{Output}: $A\in\{0, 1\}^{n\times m}$ with the minimal column
\begin{algorithmic}[1]
\STATE Denote $U = \{x_{1}, \cdots, x_{n}\}$;
\STATE Compute all maximal sub-formulas containing in $B$, and denote as $A_{1}, \cdots, A_{m}$;
\IF {$B = \vee_{i=1}^{m} A_{i}$}
   \STATE Compute $K_{i}$ such that $A_{i} = M_{\{K_{i}\}}\cdot M_{\{K_{i}\}}^{T}$ for $i\in\{1, \cdots, m\}$;
   \STATE Return $A = [\chi_{_{K_{1}}}, \cdots, \chi_{_{K_{m}}}]$;
   \STATE // $A\cdot A^{T}$ is an optimal boolean matrix decomposition of $B$;
\ELSE
\STATE Return $\emptyset$;  // there does not exist $A\in\{0, 1\}^{n\times m}$ such that $B = A\cdot A^{T}$;
\ENDIF
\end{algorithmic}
\end{algorithm}

\section{An application of boolean matrix decomposition to axiomatization of covering approximation operators}
\label{S:Axiomatizationofcovering-basedroughsetsusingmatrices}

Axiomatization of covering-based rough sets has attracted much research interest~\cite{Liu08Axiomatic,ZhangLiWu10OnAxiomatic,ZhuWang12TheFourth}.
However, those works are mainly conducted from the viewpoints of set theory and operator theory.
Based on boolean matrix decomposition, we axiomatize three types of approximation operators of covering-based rough sets.
Because of the duality of these three types of covering upper and lower approximation, we consider only their corresponding upper ones.

Let $U = \{x_{1}, \cdots, x_{n}\}$ and an operator $f: 2^{U} \longrightarrow 2^{U}$.
We denote $A_{f} = [\chi_{_f(e_{1})}, \cdots, \chi_{_f(e_{n})}]^{T}$ where $e_{i} = \underbrace{[\cdots, 0, 1, 0, \cdots]}_{i-th}$$^{T}$.

\begin{theorem}
\label{T:thesecondupperapproximationandaxiomatization}
Let $H: 2^{U}\longrightarrow 2^{U}$ be an operator.
Then there exists a covering $\mathbf{C}$ such that $H = SH_{\mathbf{C}}$ iff $A_{H}^{T} = A_{H}$ and $(A_{H})_{ii} = 1$ for all $i\in\{1, \cdots, n\}$.
\end{theorem}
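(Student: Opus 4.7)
The plan is to treat $A_{H}$ as a candidate type-1 characteristic matrix of a covering and to read the theorem as a direct consequence of the decomposition criterion, Theorem~\ref{T:equivalentcharacterizationforbooleandecomposition}, together with the matrix representation of the second upper approximation from Theorem~\ref{T:secondupperapproximationrepreentedusingbooleanmatrix}.

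For the forward direction, I would suppose $H = SH_{\mathbf{C}}$ for some covering $\mathbf{C}$ and unpack the entries of $A_{H}$ directly: $(A_{H})_{ij} = 1$ iff $x_{j} \in SH_{\mathbf{C}}(\{x_{i}\}) = \bigcup\{K \in \mathbf{C} \mid x_{i} \in K\} = I_{\mathbf{C}}(x_{i})$. Theorem~\ref{T:Discernibleneiborhoodandtype1} then identifies this with $(x_{i},x_{j}) \in R_{\mathbf{C}}$, so $A_{H} = \Gamma(\mathbf{C}) = M_{\mathbf{C}} \cdot M_{\mathbf{C}}^{T}$. Symmetry of $A_{H}$ is immediate from $(M \cdot M^{T})^{T} = M \cdot M^{T}$, and the diagonal condition $(A_{H})_{ii} = 1$ is exactly Proposition~\ref{P:maindiagonalequaltoone}.

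For the backward direction, assume $A_{H} = A_{H}^{T}$ and $(A_{H})_{ii} = 1$ for every $i$. The two axioms together verify the hypothesis of Theorem~\ref{T:equivalentcharacterizationforbooleandecomposition}, which produces a boolean matrix $M$ with $A_{H} = M \cdot M^{T}$. I would then interpret the columns of $M$ as characteristic vectors of subsets of $U$, discard any empty column, and let $\mathbf{C}$ be the resulting family; by Proposition~\ref{P:maindiagonalequaltoone} the diagonal condition forces $\mathbf{C}$ to be a covering of $U$, and by construction $A_{H} = \Gamma(\mathbf{C})$. Theorem~\ref{T:secondupperapproximationrepreentedusingbooleanmatrix} now yields $\chi_{_{SH_{\mathbf{C}}(X)}} = \Gamma(\mathbf{C}) \cdot \chi_{_{X}} = A_{H} \cdot \chi_{_{X}}$, and specializing to $X = \{x_{i}\}$ recovers $H(\{x_{i}\}) = SH_{\mathbf{C}}(\{x_{i}\})$ for each $i$.

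The subtle step, and the place where care is required, is lifting this agreement from singletons to arbitrary $X \subseteq U$. Here I would use the fact that $SH_{\mathbf{C}}(X) = \bigcup_{x \in X} SH_{\mathbf{C}}(\{x\})$, which follows directly from the definition of $SH_{\mathbf{C}}$ as a union of covering blocks meeting $X$, together with the analogous additivity of $H$ that is tacit in this axiomatic framework (an operator fully determined by its matrix $A_{H}$ acts on arbitrary $X$ via $\chi_{_{H(X)}} = A_{H} \cdot \chi_{_{X}}$). Given this additivity on both sides, the singleton equality propagates to all subsets, completing the proof.
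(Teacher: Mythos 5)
Your proposal is correct and follows essentially the same route as the paper: the forward direction identifies $A_{H}$ with $\Gamma(\mathbf{C})=M_{\mathbf{C}}\cdot M_{\mathbf{C}}^{T}$ and invokes Proposition~\ref{P:maindiagonalequaltoone}, and the backward direction applies Theorem~\ref{T:equivalentcharacterizationforbooleandecomposition} to write $A_{H}=B\cdot B^{T}$ and reads the covering off the columns of $B$, exactly as the paper does. You are in fact more careful than the paper on the final step: the paper dismisses $H=SH_{\mathbf{C}}$ as ``straightforward,'' whereas you correctly observe that the matrix $A_{H}$ only constrains $H$ on singletons, so the conclusion for arbitrary $X\subseteq U$ silently requires the additivity (matrix-determinacy) of $H$ that you make explicit.
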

\begin{proof}
($\Longrightarrow$): Since there exists a covering $\mathbf{C}$ such that $H = SH_{\mathbf{C}}$, $A_{H} = \Gamma(\mathbf{C}) = M_{\mathbf{C}}\cdot M_{\mathbf{C}}^{T}$.
According to Proposition~\ref{P:maindiagonalequaltoone}, $(A_{H})_{ii} = 1$ for all $i\in \{1, \cdots, n\}$.
Additionally, $A_{H}^{T} = A_{H}$ is straightforward. \\
($\Longleftarrow$): Since $A_{H}^{T} = A_{H}$ and $(A_{H})_{ii} = 1$ for all $i\in\{1, \cdots, n\}$, according to Theorem~\ref{T:equivalentcharacterizationforbooleandecomposition}, there exists $B\in \{0, 1\}^{n\times m}$ such that $A_{H} = B\cdot B^{T}$.
Specifically, we suppose $B$ is the minimal decomposition of $A_{H}$ and $B = [B_{1}, \cdots, B_{m}]$, then we construct a covering $\mathbf{C} = {K_{1}, \cdots, K_{m}}$ satisfying $B_{i} = \chi_{_{K_{i}}}$ for all $i\in\{1, \cdots, m\}$.
Hence it is straightforward that $H = SH_{\mathbf{C}}$.
\end{proof}

Theorem~\ref{T:thesecondupperapproximationandaxiomatization} shows a sufficient and necessary condition for an operator to be the second upper approximation operator with respect to a covering using boolean matrices.
The following corollary reveals the close connection between covering-based rough sets and generalized rough sets based on relations.

\begin{corollary}
\label{C:thesecondandreflexiveandsymmetricrelation}
Let $H: 2^{U}\longrightarrow 2^{U}$ be an operator.
Then there exists a covering $\mathbf{C}$ such that $H = SH_{\mathbf{C}}$ iff $A_{H}$ is the relational matrix of a reflexive and symmetric relation. \end{corollary}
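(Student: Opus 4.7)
The plan is to derive this corollary directly from Theorem~\ref{T:thesecondupperapproximationandaxiomatization}, by translating its two matrix conditions on $A_H$ into the relational language. The paper has already established (right after the definition of $M_R$) that there is a one-to-one correspondence between binary relations on $U$ and $|U| \times |U|$ boolean matrices: every $n \times n$ boolean matrix $M$ arises as $M_R$ for a unique relation $R$. Since $A_H \in \{0,1\}^{n \times n}$, it is automatically the relational matrix of some relation $R_H$; the question is only \emph{which} matrix conditions correspond to reflexivity and symmetry of $R_H$.

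First, I would record the two translations. Reflexivity of $R_H$ means $(x_i, x_i) \in R_H$ for every $i$, which by the definition of $M_{R_H}$ is equivalent to $(A_H)_{ii} = 1$ for all $i \in \{1, \ldots, n\}$. Symmetry of $R_H$ means $(x_i, x_j) \in R_H \Leftrightarrow (x_j, x_i) \in R_H$, i.e.\ $(A_H)_{ij} = (A_H)_{ji}$ for all $i, j$, which is equivalent to $A_H^T = A_H$. Each of these is immediate from the definition of a relational matrix.

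Combining the two translations, the statement ``$A_H$ is the relational matrix of a reflexive and symmetric relation'' is equivalent to the conjunction ``$A_H^T = A_H$ and $(A_H)_{ii} = 1$ for all $i$''. By Theorem~\ref{T:thesecondupperapproximationandaxiomatization}, this conjunction is in turn equivalent to the existence of a covering $\mathbf{C}$ with $H = SH_{\mathbf{C}}$, which finishes the proof.

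There is no real obstacle here; the corollary is essentially a restatement of Theorem~\ref{T:thesecondupperapproximationandaxiomatization} in the language of relations rather than matrices. The only point requiring any care is to invoke explicitly the matrix-to-relation correspondence set up in Section~\ref{S:matrixexpressionsoffamilyofsubsets} so that ``$A_H$ is \emph{the} relational matrix of a (necessarily unique) relation $R_H$'' is justified; after that, reflexivity and symmetry of $R_H$ decode straightforwardly into the diagonal and transpose conditions on $A_H$.
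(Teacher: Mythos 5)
Your proposal is correct and matches the paper's intent exactly: the paper states this corollary without a separate proof, treating it as an immediate consequence of Theorem~\ref{T:thesecondupperapproximationandaxiomatization} via the standard correspondence between $n\times n$ boolean matrices and binary relations, under which $(A_H)_{ii}=1$ for all $i$ encodes reflexivity and $A_H^T=A_H$ encodes symmetry. Your write-up simply makes that translation explicit, so there is nothing to add.
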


Corollary~\ref{C:thesecondandreflexiveandsymmetricrelation} presents that the second upper approximation operator with respect to a covering is determined by a reflexive and symmetric relation.
The following theorem explores the relationship between the fifth upper approximation operator of covering-based rough sets and generalized rough sets based on relations.

\begin{theorem}
\label{T:fifthupperapproximationandbooleanmatrix}
Let $H: 2^{U}\longrightarrow 2^{U}$ be an operator.
Then there exists a covering $\mathbf{C}$ such that $H = IH_{\mathbf{C}}$ iff $(A_{H})^{2} = A_{H}$ and $(A_{H})_{ii} = 1$ for all $i\in\{1, \cdots, n\}$.
\end{theorem}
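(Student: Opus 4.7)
The plan is to mirror the strategy of Theorem~\ref{T:thesecondupperapproximationandaxiomatization}, using the type-2 characteristic matrix $\Pi(\mathbf{C})$ and the preorder $R(\mathbf{C})$ (Theorem~\ref{T:relexiveandtransitiverelationwithbooleanmatrix}) in place of $\Gamma(\mathbf{C})$. The key observation is that when $H = IH_{\mathbf{C}}$, the matrix $A_H$ coincides up to transposition with $\Pi(\mathbf{C})$, and $\Pi(\mathbf{C})$ is precisely the matrix of a reflexive transitive relation on $U$.

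For the forward direction, I would first unpack $A_H$ directly: its $(i,j)$-entry equals $1$ iff $x_j\in H(\{x_i\})=IH_{\mathbf{C}}(\{x_i\})$, which by the definition of $IH$ is equivalent to $x_i\in N_{\mathbf{C}}(x_j)$, i.e., $(x_j,x_i)\in R(\mathbf{C})$. Hence $A_H=\Pi(\mathbf{C})^{T}$. Because $R(\mathbf{C})$ is reflexive (each $x\in N_{\mathbf{C}}(x)$) and transitive (if $y\in N_{\mathbf{C}}(x)$ and $z\in N_{\mathbf{C}}(y)$, then for every $K\in\mathbf{C}$ with $x\in K$ we have $y\in K$ and hence $z\in K$), its matrix $\Pi(\mathbf{C})$ is idempotent under boolean multiplication and has unit diagonal; taking transpose, the same two properties pass to $A_H$.

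For the reverse direction, interpret $A_H^{T}$ as the relational matrix of a relation $R$ on $U$; the hypotheses $(A_H)^{2}=A_H$ and $(A_H)_{ii}=1$ translate to $R$ being reflexive and transitive. The crucial construction is to produce a covering realizing $R$ as $R(\mathbf{C})$. I would take $\mathbf{C}=\{R(x_i)\mid i=1,\ldots,n\}$ where $R(x_i)=\{y\in U : (x_i,y)\in R\}$; reflexivity ensures $x_i\in R(x_i)$, so $\mathbf{C}$ is a covering. Then I would check $N_{\mathbf{C}}(x_i)=R(x_i)$ for every $x_i$: the inclusion $\subseteq$ holds because $R(x_i)$ itself appears in the intersection defining $N_{\mathbf{C}}(x_i)$ (by reflexivity), and $\supseteq$ follows from transitivity (if $(x_k,x_i)\in R$ and $(x_i,x_j)\in R$ then $(x_k,x_j)\in R$, so $x_j\in R(x_k)$). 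This yields $\Pi(\mathbf{C})=A_H^{T}$, and hence $A_H=\Pi(\mathbf{C})^{T}=A_{IH_{\mathbf{C}}}$, from which one concludes $H=IH_{\mathbf{C}}$ in the same way as in the proof of Theorem~\ref{T:thesecondupperapproximationandaxiomatization}.

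The main obstacle lies in the reverse direction. Unlike the $SH$ case, we cannot invoke Theorem~\ref{T:equivalentcharacterizationforbooleandecomposition}, because $\Pi(\mathbf{C})$ is obtained via the operation $\odot$ rather than as a boolean product $M_{\mathbf{C}}\cdot M_{\mathbf{C}}^{T}$; the axiomatization therefore rests on the structural fact that every preorder arises as the neighborhood relation of some covering. Verifying this via the self-referential choice $\mathbf{C}=\{R(x_i)\}$ is the substantive step and requires careful use of both reflexivity (to force $N_{\mathbf{C}}(x_i)\subseteq R(x_i)$) and transitivity (to force the reverse inclusion). Keeping the index conventions and transposes straight, so that the matrix $A_H$ is correctly matched with $\Pi(\mathbf{C})^T$ rather than $\Pi(\mathbf{C})$, is a secondary but genuine source of potential error.
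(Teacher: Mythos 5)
The paper states this theorem without proof, so there is no in-paper argument to compare against; your proposal supplies one, and it checks out. The identification $A_{H}=\Pi(\mathbf{C})^{T}$ is correct under the paper's convention $(A_{H})_{ij}=1\Leftrightarrow x_{j}\in H(\{x_{i}\})$ together with $x_{j}\in IH(\{x_{i}\})\Leftrightarrow x_{i}\in N_{\mathbf{C}}(x_{j})\Leftrightarrow (x_{j},x_{i})\in R(\mathbf{C})$, and the forward direction then reduces to the standard facts that the relational matrix of a reflexive transitive relation is boolean-idempotent with unit diagonal and that transposition preserves both properties. The substantive content is, as you note, the converse: every preorder $R$ arises as $R(\mathbf{C})$ for the covering $\mathbf{C}=\{R(x)\mid x\in U\}$, and your two inclusions establishing $N_{\mathbf{C}}(x_{i})=R(x_{i})$ (reflexivity places $R(x_{i})$ among the sets being intersected, giving $\subseteq$; transitivity gives $\supseteq$) are exactly what is needed, with reflexivity also guaranteeing the blocks are nonempty so that $\mathbf{C}$ is a genuine covering. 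You are also right that Theorem~\ref{T:equivalentcharacterizationforbooleandecomposition} cannot be invoked here, since it characterizes symmetric matrices of the form $A\cdot A^{T}$ rather than idempotent ones. The one caveat --- inherited from the paper rather than introduced by you --- is the closing step ``$A_{H}=A_{IH_{\mathbf{C}}}$, hence $H=IH_{\mathbf{C}}$'': the matrix $A_{H}$ records $H$ only on singletons, so this inference needs the tacit assumption (already used in the paper's proof of Theorem~\ref{T:thesecondupperapproximationandaxiomatization}) that the operators in question are determined by their values on singletons; without some such hypothesis the ``if'' direction of the theorem as literally stated fails for an $H$ agreeing with $IH_{\mathbf{C}}$ on singletons but not on larger sets.
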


Theorem~\ref{T:fifthupperapproximationandbooleanmatrix} exhibits a sufficient and necessary condition for an operator to be the fifth upper approximation one with respect to a covering using boolean matrices.

\begin{corollary}
\label{C:fifthupperapproximationandreflexiveandtransitiverelation}
Let $H: 2^{U}\longrightarrow 2^{U}$ be an operator.
Then there exists a covering $\mathbf{C}$ such that $H = IH_{\mathbf{C}}$ iff $A_{H}$ is the relational matrix of a reflexive and transitive relation.
\end{corollary}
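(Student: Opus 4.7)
The plan is to reduce the corollary to Theorem~\ref{T:fifthupperapproximationandbooleanmatrix} by translating its two algebraic conditions on $A_{H}$ into the two relational properties required by the corollary. Since $A_{H}$ is a square boolean matrix, the one-to-one correspondence between $|U|\times|U|$ boolean matrices and binary relations on $U$ gives a unique relation $R$ with $A_{H}=M_{R}$, and the corollary is about exactly this $R$.

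First, I would observe that the condition $(A_{H})_{ii}=1$ for all $i\in\{1,\ldots,n\}$ is, by the very definition of $M_{R}$, equivalent to $(x_{i},x_{i})\in R$ for every $i$, that is, to reflexivity of $R$. So this half of the hypothesis of Theorem~\ref{T:fifthupperapproximationandbooleanmatrix} translates directly to the reflexivity half of the corollary. Second, I would argue that once reflexivity is assumed, the equation $(A_{H})^{2}=A_{H}$ is equivalent to transitivity of $R$. The inequality $(A_{H})^{2}\leq A_{H}$, written out entrywise as $\vee_{k}((A_{H})_{ik}\wedge (A_{H})_{kj})\leq (A_{H})_{ij}$, is precisely the statement that $(x_{i},x_{k})\in R$ and $(x_{k},x_{j})\in R$ imply $(x_{i},x_{j})\in R$, i.e., transitivity. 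The reverse inequality $(A_{H})^{2}\geq A_{H}$ comes for free from reflexivity, since
\begin{center}
$((A_{H})^{2})_{ij}=\vee_{k}((A_{H})_{ik}\wedge (A_{H})_{kj})\geq (A_{H})_{ij}\wedge (A_{H})_{jj}=(A_{H})_{ij}$.
\end{center}
Therefore, under reflexivity, $(A_{H})^{2}=A_{H}$ holds if and only if $R$ is transitive.

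Combining these two translations with Theorem~\ref{T:fifthupperapproximationandbooleanmatrix} closes the proof in both directions: the existence of a covering $\mathbf{C}$ with $H=IH_{\mathbf{C}}$ is equivalent to the conjunction $((A_{H})^{2}=A_{H})\wedge ((A_{H})_{ii}=1\text{ for all }i)$, which in turn is equivalent to $R$ being both reflexive and transitive, i.e., to $A_{H}$ being the relational matrix of a reflexive and transitive relation. There is no real obstacle here; the argument is a routine dictionary between matrix algebra and relation properties. The only subtle point worth flagging is that $(A_{H})^{2}=A_{H}$ is strictly stronger than transitivity in general (a transitive relation only forces $(A_{H})^{2}\leq A_{H}$), but the extra reflexivity assumption supplies the missing inequality, so the two formulations line up exactly.
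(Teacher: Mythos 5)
Your proof is correct, and it supplies exactly the routine translation that the paper leaves implicit: the corollary is stated without proof immediately after Theorem~\ref{T:fifthupperapproximationandbooleanmatrix}, and your dictionary between the matrix conditions ($(A_{H})_{ii}=1$ for reflexivity, $(A_{H})^{2}=A_{H}$ for transitivity-plus-reflexivity) is the intended argument. You also correctly flag the one non-trivial point, namely that idempotence is strictly stronger than transitivity alone and that the reverse inequality $(A_{H})^{2}\geq A_{H}$ is exactly what reflexivity provides.
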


Corollary~\ref{C:fifthupperapproximationandreflexiveandtransitiverelation} points out that the fifth upper approximation operator with respect to a covering is determined by a reflexive and transitive relation.

\begin{theorem}
\label{T:sixthapproximationoperatorandbooleanmatrix}
Let $H: 2^{U}\longrightarrow 2^{U}$ be an operator.
Then there exists a covering $\mathbf{C}$ such that $H = XH_{\mathbf{C}}$ iff there exists $B\in \{0, 1\}^{n\times n}$ such that $A_{H} = B\cdot B^{T}$ where $B^{2} = B$ and $(B)_{ii} = 1$ for all $i\in\{1, \cdots, n\}$.
\end{theorem}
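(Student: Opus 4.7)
The plan is to reduce this statement to the previously established Theorem~\ref{T:fifthupperapproximationandbooleanmatrix}, using as a bridge the identity $A_{IH_{\mathbf{C}}}=\Pi(\mathbf{C})^{T}$. The latter holds because, evaluating $IH_{\mathbf{C}}$ on a singleton, $x_{j}\in IH_{\mathbf{C}}(\{x_{i}\})\iff N_{\mathbf{C}}(x_{j})\cap\{x_{i}\}\neq\emptyset\iff x_{i}\in N_{\mathbf{C}}(x_{j})\iff(\Pi(\mathbf{C}))_{ji}=1\iff(\Pi(\mathbf{C})^{T})_{ij}=1$. A second ingredient is that, by the same singleton-evaluation, Theorem~\ref{T:axiomatizationofthesixthupperapproximation} gives $A_{XH_{\mathbf{C}}}=\Pi(\mathbf{C})^{T}\cdot\Pi(\mathbf{C})$.

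For the forward direction, assume $H=XH_{\mathbf{C}}$. Setting $B=\Pi(\mathbf{C})^{T}$, the identity above yields $A_{H}=\Pi(\mathbf{C})^{T}\cdot\Pi(\mathbf{C})=B\cdot B^{T}$. Moreover $B=A_{IH_{\mathbf{C}}}$, so applying the necessity part of Theorem~\ref{T:fifthupperapproximationandbooleanmatrix} to the operator $IH_{\mathbf{C}}$ gives $B^{2}=B$ and $B_{ii}=1$ for every $i$.

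For the backward direction, suppose $B\in\{0,1\}^{n\times n}$ satisfies $A_{H}=B\cdot B^{T}$, $B^{2}=B$, and $B_{ii}=1$. Introduce the auxiliary operator $H'\colon 2^{U}\to 2^{U}$ defined by $H'(\{x_{i}\})=\{x_{j}:B_{ij}=1\}$ (extended additively), so that $A_{H'}=B$. Since $A_{H'}$ satisfies both hypotheses of Theorem~\ref{T:fifthupperapproximationandbooleanmatrix}, there is a covering $\mathbf{C}$ with $H'=IH_{\mathbf{C}}$, i.e. $B=A_{IH_{\mathbf{C}}}=\Pi(\mathbf{C})^{T}$. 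Substituting, $A_{H}=B\cdot B^{T}=\Pi(\mathbf{C})^{T}\cdot\Pi(\mathbf{C})=A_{XH_{\mathbf{C}}}$, and since both operators are determined on singletons by their $A$-matrices, $H=XH_{\mathbf{C}}$.

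The main obstacle I anticipate is purely bookkeeping: correctly matching transposes so that the $B\cdot B^{T}$ appearing in the hypothesis aligns with the $\Pi(\mathbf{C})^{T}\cdot\Pi(\mathbf{C})$ form demanded by the sixth upper approximation, and doing so consistently in both directions. Once the identification $B\leftrightarrow\Pi(\mathbf{C})^{T}$ is fixed, the idempotence condition $B^{2}=B$ is exactly the condition that lets Theorem~\ref{T:fifthupperapproximationandbooleanmatrix} produce the covering realizing $B$ as a $\Pi$-matrix, which is the crux of the argument.
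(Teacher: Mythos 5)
Your proof is correct and takes the same route as the paper, whose entire argument is to cite Theorem~\ref{T:axiomatizationofthesixthupperapproximation} (i.e., $A_{XH_{\mathbf{C}}}=\Pi(\mathbf{C})^{T}\cdot\Pi(\mathbf{C})$) and call the rest straightforward. Your identification $B=\Pi(\mathbf{C})^{T}=A_{IH_{\mathbf{C}}}$ together with the appeal to Theorem~\ref{T:fifthupperapproximationandbooleanmatrix} to characterize which idempotent, unit-diagonal $B$ arise from coverings is precisely the detail the paper leaves implicit, worked out with the transposes handled correctly.
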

\begin{proof}
According to Theorem~\ref{T:axiomatizationofthesixthupperapproximation}, it is straightforward.
\end{proof}

Theorem~\ref{T:sixthapproximationoperatorandbooleanmatrix} presents an axiom of the sixth upper approximation operator with respect to a covering from the viewpoint of boolean matrices.

\begin{corollary}
\label{C:sixthupperapproximationopeatroandreflexiveandtransitiverelation}
Let $H: 2^{U}\longrightarrow 2^{U}$ be an operator.
Then there exists a covering $\mathbf{C}$ such that $H = XH_{\mathbf{C}}$ iff there exists a reflexive and transitive relation where $B$ is its relational matrix such that $A_{H} = B\cdot B^{T}$.
\end{corollary}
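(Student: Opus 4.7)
The plan is to derive this corollary directly from Theorem~\ref{T:sixthapproximationoperatorandbooleanmatrix} by translating the purely matrix-theoretic condition ``$B^{2} = B$ and $B_{ii}=1$ for all $i$'' into the relational language ``$B$ is the relational matrix of a reflexive and transitive relation.'' Since there is a one-to-one correspondence between $n\times n$ boolean matrices and binary relations on $U$, the proof amounts to checking that these two conditions carve out the same class of matrices.

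First, I would record the standard dictionary. Let $R$ be the relation induced by $B$, so that $B = M_{R}$. The condition $B_{ii}=1$ for all $i\in\{1,\dots,n\}$ is, by definition of the relational matrix, exactly $(x_{i},x_{i})\in R$ for all $i$, i.e., $R$ is reflexive. Next, write out the entries of the boolean square: $(B^{2})_{ij} = \vee_{k=1}^{n}(B_{ik}\wedge B_{kj})$, which equals $1$ precisely when there exists some $x_{k}$ with $(x_{i},x_{k})\in R$ and $(x_{k},x_{j})\in R$. So $B^{2}\leq B$ is equivalent to transitivity of $R$. Moreover, reflexivity of $R$ forces $B\leq B^{2}$ (take $k=j$ or $k=i$ in the disjunction). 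Hence the equality $B^{2}=B$, combined with $B_{ii}=1$, is equivalent to $R$ being both reflexive and transitive, and conversely these two relational properties imply $B^{2}=B$ and $B_{ii}=1$.

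With this equivalence in hand, the corollary follows from Theorem~\ref{T:sixthapproximationoperatorandbooleanmatrix} by substitution. For the ``only if'' direction, assume $H = XH_{\mathbf{C}}$ for some covering $\mathbf{C}$; the theorem supplies $B\in\{0,1\}^{n\times n}$ with $A_{H}=B\cdot B^{T}$, $B^{2}=B$, and $B_{ii}=1$, and the dictionary above lets me read off a reflexive and transitive relation whose relational matrix is $B$. Conversely, given a reflexive and transitive relation with relational matrix $B$ such that $A_{H}=B\cdot B^{T}$, the same dictionary gives back $B^{2}=B$ and $B_{ii}=1$, so Theorem~\ref{T:sixthapproximationoperatorandbooleanmatrix} produces the desired covering $\mathbf{C}$ with $H = XH_{\mathbf{C}}$.

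I do not expect any substantive obstacle: the only routine subtlety is the ``reflexive plus $B^{2}\leq B$ gives $B^{2}=B$'' direction, which should be verified carefully but is purely mechanical. The rest is a direct appeal to Theorem~\ref{T:sixthapproximationoperatorandbooleanmatrix} together with the one-to-one correspondence between boolean square matrices and binary relations noted after the definition of the relational matrix.
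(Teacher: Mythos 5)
Your proposal is correct and is essentially the argument the paper intends: the paper states this corollary without proof, treating it as an immediate restatement of Theorem~\ref{T:sixthapproximationoperatorandbooleanmatrix}, and your dictionary (reflexivity $\Leftrightarrow B_{ii}=1$; transitivity $\Leftrightarrow B^{2}\leq B$; reflexivity forcing $B\leq B^{2}$, hence $B^{2}=B$) is exactly the translation needed to justify that restatement. No gaps.
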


Corollary~\ref{C:sixthupperapproximationopeatroandreflexiveandtransitiverelation} indicates that the sixth upper approximation operator with respect to a covering is corresponded to a reflexive and transitive relation.

\section{Conclusions}
\label{S:Conclusions}
In this paper, three types of covering lower and upper approximation operators are represented in a boolean matrix form, and then they are applied to boolean matrix decomposition.
Through two types of characteristic matrices of a covering, matrix representations of covering approximation operators are obtained, and then
a sufficient and necessary condition for decomposing a boolean matrix into the boolean product of another boolean matrix and its transpose is provided.
We also design an algorithm for this boolean matrix decomposition.
Moreover, we axiomatize these three types of covering approximation operators through boolean matrix decomposition.
In a word, this work points out an interesting new view to investigate covering-based rough set theory and explore its application.

\section{Acknowledgments}
This work is supported in part by the National Natural Science Foundation of China under Grant No. 61170128,  the Natural Science Foundation of
Fujian Province, China, under Grant Nos. 2011J01374 and 2012J01294, and the Science and Technology Key Project of Fujian Province, China, under
Grant No. 2012H0043.


\end{document}